\documentclass{article}
\usepackage[dvipsnames]{xcolor}
\usepackage{subcaption}
\usepackage{pgf}
\usepackage{adjustbox}
\usepackage{wrapfig}
\usepackage{tikz}
\usepackage{enumitem}
\usepackage{tcolorbox}
\usepackage{comment}
\usepackage{titletoc}
\usepackage[colorlinks=true, citecolor=black]{hyperref}
\hypersetup{colorlinks=true,allcolors=[rgb]{0.0,0.1,0.5}}
\NewDocumentCommand{\incplt}{O{\columnwidth}m}{%
  \begin{center}
    \adjustbox{center}{\adjustbox{width=#1+10pt}{\includegraphics[width=#1]{./final_paper_plots/#2.pdf}}}
  \end{center}
}
\PassOptionsToPackage{numbers}{natbib}

\usepackage[final]{neurips_2025}

\usepackage[utf8]{inputenc} %
\usepackage[T1]{fontenc}    %
\usepackage{hyperref}       %
\usepackage{url}            %
\usepackage{booktabs}       %
\usepackage{amsfonts}       %
\usepackage{nicefrac}       %
\usepackage{microtype}      %
\usepackage{graphicx}
\usepackage{bbm}
\usepackage{amsmath,amssymb,mathtools,amsthm}
\usepackage{subcaption}
\DeclareMathOperator*{\argmax}{arg\,max}
\DeclareMathOperator*{\argmin}{arg\,min}
\usepackage{float}
\usepackage{algorithm}
\usepackage{algpseudocode}
\usepackage[capitalize,noabbrev]{cleveref}
\makeatletter
\AddToHook{cmd/appendix/before}{\def\cref@section@alias{appendix}}
\AddToHook{cmd/appendix/before}{\def\cref@subsection@alias{appendix}}
\AddToHook{cmd/appendix/before}{\def\cref@subsubsection@alias{appendix}}
\makeatother
\algtext*{EndFor}

\usepackage{wrapfig}            %
\usepackage{booktabs}           %
\usepackage[table]{xcolor}      %
\definecolor{lightgray}{gray}{0.9} %

\title{DISCOVER: Automated Curricula for \\ Sparse-Reward Reinforcement Learning}

\definecolor{red}{HTML}{f35252}
\definecolor{blue}{HTML}{588bf8}

\usepackage{titlesec}
\titlespacing*{\paragraph}{0pt}{0.25ex}{2ex}

\theoremstyle{plain}
\newtheorem{theorem}{Theorem}[section]
\newtheorem{informal_theorem}[theorem]{Informal Theorem}
\newtheorem{lemma}[theorem]{Lemma}
\newtheorem{proposition}[theorem]{Proposition}

\theoremstyle{definition}
\newtheorem{assumption}[theorem]{Assumption}
\newtheorem{informal_assumption}[theorem]{Informal Assumption}

\crefname{theorem}{Theorem}{Theorems}
\crefname{informal_theorem}{Informal Theorem}{Informal Theorems}
\crefname{lemma}{Lemma}{Lemmas}
\crefname{proposition}{Proposition}{Propositions}
\crefname{assumption}{Assumption}{Assumptions}
\crefname{informal_assumption}{Informal Assumption}{Informal Assumptions}

\DeclarePairedDelimiter\parentheses{(}{)}
\DeclarePairedDelimiter\brackets{[}{]}

\newcommand{\ach}{\mathrm{ach}}

\newcommand{\defeq}{\overset{\mathrm{def}}{=}}

\newcommand{\spG}{\mathcal{G}}
\newcommand{\spT}{\mathcal{T}}

\author{%
  Leander Diaz‐Bone\thanks{Equal contribution. Correspondence to Leander Diaz‐Bone <ldiazbone@ethz.ch>.}\textsuperscript{\normalfont \;\;,1}%
  \quad Marco Bagatella\footnotemark[1]\textsuperscript{\normalfont \;\;,1,2}%
  \quad Jonas Hübotter\footnotemark[1]\textsuperscript{\normalfont \;\;,1}%
  \quad Andreas Krause\textsuperscript{\normalfont 1}%
  \\[3pt]
  \textsuperscript{1}ETH Zürich, Switzerland
  \quad \textsuperscript{2}Max Planck Institute for Intelligent Systems, Germany
}

\begin{document}

\maketitle

\begin{abstract}
Sparse-reward reinforcement learning~(RL) can model a wide range of highly complex tasks.
Solving sparse-reward tasks is RL's core premise---requiring efficient exploration coupled with long-horizon credit assignment---and overcoming these challenges is key for building self-improving agents with superhuman ability.
Prior work commonly explores with the objective of solving \textit{many} sparse-reward tasks, making exploration of \emph{individual} high-dimensional, long-horizon tasks intractable.
We argue that solving such challenging tasks requires solving simpler tasks that are \emph{relevant} to the target task, i.e., whose achieval will teach the agent skills required for solving the target task.
We demonstrate that this sense of direction, necessary for effective exploration, can be extracted from existing RL algorithms, without leveraging any prior information.
To this end, we propose a method for \emph{\textbf{di}rected \textbf{s}parse-reward goal-\textbf{co}nditioned \textbf{ve}ry long-horizon \textbf{R}L} (DISCOVER), which selects exploratory goals in the direction of the target task. %
We connect DISCOVER to principled exploration in bandits, formally bounding the time until the target task becomes achievable in terms of the agent's initial distance to the target, but independent of the volume of the space of all tasks.
We then perform a thorough evaluation in high-dimensional environments. We find that the directed goal selection of DISCOVER solves exploration problems that are beyond the reach of prior state-of-the-art exploration methods in RL.\looseness=-1

\end{abstract}

\begin{figure}[H]
    \centering
    \vspace{-5ex}
    \incplt[\textwidth]{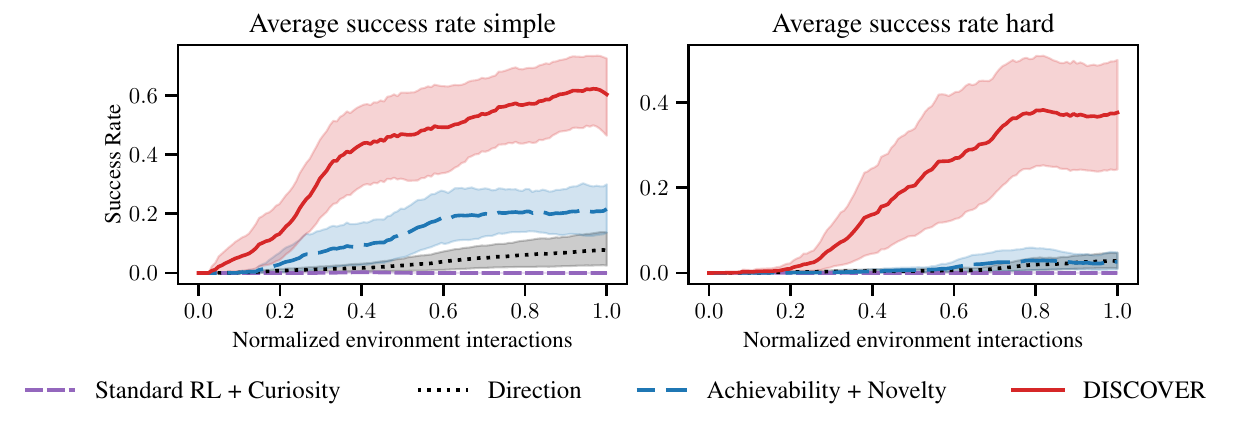}
    \vspace{-3ex}
    \caption{
    Given a hard task, we compare agents learning to solve this target task by learning from the experience on simpler exploratory tasks.
    DISCOVER uses a bootstrapped sense of direction to design a curriculum of achievable and novel exploratory tasks that are relevant to the target task.
    In this way, the agent bootstraps to solve much harder tasks than if using other methods for selecting exploratory tasks, such as considering only direction or only achievability and novelty.
    State-of-the-art standard RL algorithms using intrinsic curiosity for exploration fail to achieve our target tasks at all.\looseness=-1} %
    \label{fig:title}
\end{figure}

\section{Introduction}

Reinforcement learning (RL) provides a general framework in which agents have to learn complex tasks by interacting with their environment to maximize rewards~\citep{Sutton1998}.
Although RL has led to breakthroughs in many domains such as Atari games~\citep{atari}, board games~\citep[e.g.,][]{alphago}, and reasoning~\citep{guo2025deepseek}, sparse-reward problems, in which a reward is only observed once the task is completed, remain very challenging \citep{exploration_in_deep_rl, sibling_rivalry}.
These sparse-reward tasks are hard to solve as the agent needs to gain a deep understanding of the environment without ever observing a reward.
As such, and since they are naturally defined without any human supervision, solving sparse-reward tasks is one of the core premises of RL.
We focus on the setting where the agent is given a \emph{single} sparse-reward problem to solve.
In sparse-reward RL, traditional methods from RL fail as they need to randomly observe the reward signal to make progress, which becomes exceedingly unlikely in long-horizon tasks~\citep[e.g.,][]{gen_exp_rand_val,zhao2025absolute}.\looseness=-1

One technique for overcoming this challenge is to frame sparse-reward tasks as multi-goal problems, which enables the agent to learn about harder goals by generalizing from already learned simpler goals~\citep{schaul15, her}.
In this framework, a goal needs to be specified for the policy in each episode. %
This goal selection step plays a crucial role in guiding the exploration of the agent~\citep{mega}.
Choosing a goal essentially presents its own exploration-exploitation dilemma at a higher level of abstraction: should the agent try to pursue its final goal or rather aim to reach novel, but achievable intermediate goals, expanding its horizon of expertise? Previously introduced goal selection strategies have focused largely on either exploration~\citep[e.g.,][]{mega} or exploitation~\citep{her}.\looseness=-1

In this work, we argue that goal selection should naturally address this trade-off, considering both the \textit{novelty} of commanded goals, and thus their exploration potential, as well as their \textit{relevance} to the final objective, while considering their \textit{achievability}.
We propose DISCOVER, a method for effectively balancing \textit{achievability}, \textit{relevance}, and \textit{novelty} during goal selection.
We find that these quantities can be estimated by commonly used critic networks in standard deep RL algorithms.
We learn an ensemble of critics to estimate the agent's uncertainty about the direction of the final task.
Intuitively, DISCOVER leverages these value estimates to determine which intermediate goals are most useful towards learning to solve the final task.
We find that \textbf{DISCOVER enables RL agents to solve substantially more challenging tasks than previous exploration strategies in RL}.\looseness=-1

We further provide a formal analysis of DISCOVER, showing that it is closely linked to upper confidence bound~(UCB) sampling~\citep{srinivas2009gaussian}, a method for balancing exploration and exploitation in sequential decision-making.
We use this connection to prove bounds on the number of episodes until the final task becomes achievable.
Unlike bounds in prior work, our bound for DISCOVER is independent of the volume of the space of all goals and depends linearly on the ``shortest distance'' between the agent's initial state and the state at which the final task is achieved.
We follow this formal discussion by an extensive evaluation of DISCOVER in three complex sparse-reward environments, ranging from loco-navigation to manipulation.
We observe significant improvements in sample efficiency compared to previous state-of-the-art exploration strategies in RL.\looseness=-1

Our contributions are:
\begin{enumerate}[label=\arabic*., leftmargin=20pt, itemsep=0pt, topsep=0pt]
    \item We propose DISCOVER, a novel goal selection strategy for solving hard tasks.
    By leveraging its connections to UCB, we theoretically analyze the time until the final task becomes achievable.
    \item We evaluate the empirical performance of DISCOVER on various complex control tasks and show substantially improved performance compared to state-of-the-art goal selection strategies.
    \item We perform a range of ablation studies, including on the utilization of prior knowledge and on the importance of balancing achievability, novelty, and relevance in goal selection.
\end{enumerate}

\section{Related Work}

\paragraph{Exploration in RL}
Exploration has been a central challenge in reinforcement learning since its inception.
Uninformed exploration strategies, such as $\epsilon$-greedy~\citep{mnih2013playing} or uncorrelated noise injection~\citep{td3}, are crucial to the success of most practical algorithms.
Further improvements can be obtained by leveraging temporally extended strategies~\citep{eberhard2023pink}, or informed criteria, such as curiosity~\citep{pathak2017curiosity,haarnoja2018softactorcriticoffpolicymaximum, sancaktar2022curious, sukhija2023optimistic, sukhija2024maxinforl}, diversity~\citep{diversity_all_you_need}, count-based strategies~\citep{tang2017exploration, burda2018exploration}, or unsupervised environment design~\citep{wang2019paired,dennis2020emergent}.
These strategies are normally applied without any extrinsic signal~\citep{lexa, sancaktar2022curious, sukhija2023optimistic, park2024metra}, or in linear combination with a specific reward signal as defined by the MDP \citep{burda2018exploration,sukhija2024maxinforl}; in the latter case, balancing the two terms is often complex. This work focuses instead on a setting in which exploration is instead achieved by attempting to solve different goals than the target goal, i.e., through exploitation with respect to a different reward, as specified in the next paragraph.\looseness=-1

\paragraph{Finding solvable subgoals for exploration}
Solving \emph{many} sparse-reward problems with RL has often been formalized as goal-conditioned \citep{contrastive_learning_goal_conditioned,riedmiller2018learning} or goal-reaching \citep{wang2023optimal} RL, where we jointly learn a policy for all tasks.
A key contribution can be traced back to \citet{schaul15}, which introduces \textit{universal} value function estimators, capable of estimating returns for multiple goals.
To learn these functions efficiently, \citet{her} proposes a goal relabeling scheme to provide signal for goals that are achieved in hindsight.
Further works have explored emerging properties of goal-conditioned value functions, introducing contrastive~\citep{contrastive_learning_goal_conditioned, jax_gcrl} or quasimetric~\citep{wang2023optimal,durugkar2021adversarial,agarwal2023f} approaches.
Interestingly, goal-conditioned approaches enable the design of nuanced exploration strategies, as the policy can be controlled through its goal-conditioning~\citep{skew_fit, discern, mega, adversarial_goal_generation,sukhbaatar2018intrinsic,val_disagreement,peg, lexa,poesia2024learning}.
These methods strategically select a goal from the previously achieved goals. After the goal is achieved, the methods enter a pure exploration phase, in line with the successful exploration strategy proposed by~\citet{go_explore}.
However, previous methods are generally not directed towards a particular target goal, with some exceptions relying on strong assumptions on the distance metric~\citep{hindsightgoalgeneration}, or on additional costly components such as generative~\citep{lee2023cqm, sayar2024diffusion} or discriminative~\citep{cho2023outcome,cho2023diversify} models.
In contrast, DISCOVER introduces a notion of relevance during exploration, which is fully bootstrapped from the agent's own estimates and  experience, leading to deep exploration.
We further discuss the connection of DISCOVER to other work such as self-play in \cref{addtl_related_work}.\looseness=-1

\paragraph{Test-time training}
We consider the setting where the agent's objective is to solve a single, challenging target task provided at ``test-time'', potentially starting from a pre-trained prior.
This is a form of test-time training~(TTT)~\citep{sun2020test}, where an agent is trained specifically for the target task at test-time.
TTT on (self-)supervised signals over few gradient updates has shown success in domains such as control~\citep{hansen2021self}, language modeling~\citep{hardt2024test,hubotter2024efficiently,sun2024learning,bertolissi2025local}, abstract reasoning~\citep{akyurek2024surprising}, and video generation~\citep{dalal2025one}.
More recently, test-time reinforcement learning~(TTRL)~\citep{zuo2025ttrl} has used RL to iteratively self-improve an agent's policy for initially unsolvable tasks, using its own experience on simpler tasks.
To our knowledge, DISCOVER is the first method demonstrating effective TTRL with extensive self-supervised exploration~(millions of steps), thereby enabling agents to solve highly difficult tasks.\looseness=-1

\section{Problem Setting}

We consider the sparse-reward reinforcement learning setting and adopt the goal-conditioned formulation \citep{Sutton1998,schaul15}, formally described by a multi-goal Markov decision process $\mathcal{M}=(\mathcal{S},\mathcal{A},p,\mu_0,\mathcal{G},g^\star)$.
Here $\mathcal{S},\mathcal{A},$ and $\mathcal{G}\subseteq \mathcal{S}$ denote the set of states, actions, and goals, respectively.
Additionally, ${p:\mathcal{S}\times\mathcal{A}\to\Delta(\mathcal{S})}$ are the transition probabilities and $\mu_0$ is the initial state distribution.
In the following, we denote by $s_0$ a random initial state sampled from~$\mu_0$.
The target goal, denoted by~$g^\star$, is the only goal we evaluate on and therefore aim to learn.
The sparse goal-conditioned reward is implicitly defined as $r(s,a;g)=-\mathbbm{1}\{s\notin \mathcal{S}_g\}$~\citep{request_for_research_plappert}, where $\mathcal{S}_g\subseteq\mathcal{S}$ is the subset of the states, for which the goal $g$ is achieved.
We consider the general case, where this subset is defined as $\mathcal{S}_g=\{s\in\mathcal{S} \mid d(s,g)\leq\epsilon\}$. Here, $d$ defines a standard distance metric, possibly only considering some parts of the state, such as position. A goal is considered achieved during training if we have previously observed a non-negative reward for this goal. We keep track of the set of achieved goals $\mathcal{G}_\ach$, which contains all previously achieved goals (initially $\mathcal{G}_\ach=\{s_0\}$). In practice, this is implemented by maintaining a replay buffer of observed states.
We consider a fixed episode length~$H$, with the episode terminating once the target goal is reached.
The objective of the agent is to learn a goal-conditioned policy $\pi:\mathcal{S}\times \mathcal{G}\to \Delta(\mathcal{A})$ that maximizes its value function~$V^\pi(s_0,g^\star)$,\looseness=-1
\begin{equation}
V^\pi(s,g) = \textstyle \mathbb{E}_\pi\left[-\sum_{t=0}^{H-1} \mathbbm{1}\{s_{t'} \notin \mathcal{S}_g \forall t'\leq t\}\right].
\label{eq:value}
\end{equation}
We denote the optimal policy by $\pi^\star: \mathcal{S} \to \Delta(\mathcal{A})$ and use $\pi_g:\mathcal{S}\to\Delta(\mathcal{A})$ to denote the policy conditioned on $g$.\looseness=-1

\subsection{Goal Selection for Sparse-Reward RL}
The standard non-goal-conditioned online RL loop collects data according to the current policy, often with noise injection \citep{eberhard2023pink}.
In the goal-conditioned framework, the agent can additionally control the goal-conditioning of the policy, i.e., it may pursue arbitrary goals for data collection.
We adopt the framework of \citet{mega}, and summarize its main steps in Algorithm~\ref{algo:multi_goal_rl}. In each episode, \texttt{SelectGoal} commands a goal $g_t$ to the agent, based on prior experience, the current state of the agent and the final target goal $g^\star$.
We then roll out the policy conditioned on $g_t$ until $g_t$ is achieved.
After achieving the goal $g_t$, the agent enters a random exploration phase, during which it selects actions uniformly at random until the end of the episode.
We add the resulting trajectory to the replay buffer.
Finally, we update the agent's parameters $\theta$ using an off-policy RL algorithm, by sampling from the replay buffer and relabeling goals, as proposed by \citet{her}.
The focus of this work is \texttt{SelectGoal}, which designs the agent's curriculum.
Next, we introduce DISCOVER, which selects a curriculum specifically targeted to $g^\star$.\looseness=-1

\begin{algorithm}
\caption{Goal-conditioned Reinforcement Learning}
\label{algo:multi_goal_rl}
\begin{algorithmic}[1]
\State \textbf{Initialize:} replay buffer: $\mathcal{B}_0\gets \emptyset$, actor-critic parameters: $\theta_0$
\For{$t = 1,2,\dots$}
    \State $g_t \gets \texttt{SelectGoal}(\mathcal{B}_{t-1}, \theta_{t-1}, g^\star)$ \Comment{select goal $g_t$ for episode}
    \State $\mathcal{B}_{t}\gets \mathcal{B}_{t-1}\cup \{\texttt{Rollout}(\pi_{\theta_{t-1}}, g_t)\}$ \Comment{add trajectory to replay buffer}
    \State $\theta_t \gets \texttt{Update}(\texttt{RelabelGoals}(\texttt{Sample}(\mathcal{B}_t)), \theta_{t-1})$ \Comment{using off-policy RL (e.g., TD3~\citep{td3})}
\EndFor
\end{algorithmic}
\end{algorithm}

\section{DISCOVER}

\begin{figure}
\center
    \includegraphics[width=\linewidth]{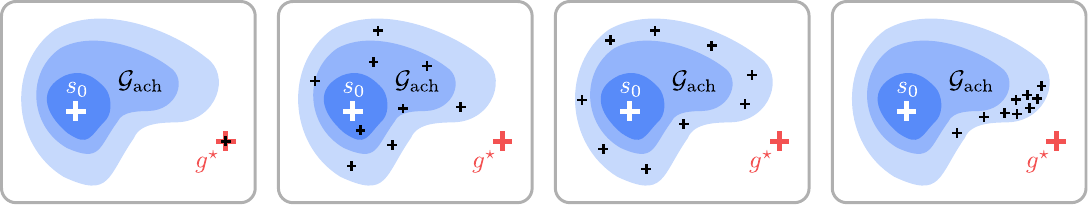}
    \begin{minipage}{0.25\linewidth}
        \center HER
    \end{minipage}%
    \begin{minipage}{0.25\linewidth}
        \center Uniform (DISCERN)
    \end{minipage}%
    \begin{minipage}{0.25\linewidth}
        \center MEGA
    \end{minipage}%
    \begin{minipage}{0.25\linewidth}
        \center DISCOVER
    \end{minipage}%
    \caption{Illustration of the goal selection of DISCOVER compared to prior goal selection strategies. The \textcolor{gray}{white} cross represents the initial state of the agent, the \textcolor{red}{red} cross represents the target goal. The \textcolor{blue}{blue} shaded area symbolizes the set of achieved goals $\spG_\ach$. A lighter blue corresponds to harder to reach goals. Finally, the black crosses represent the kinds of goals selected by each strategy.}
    \label{fig:main}
\end{figure}

We introduce DISCOVER, a method for solving hard tasks that require deep exploration.
We focus on the problem of finding intermediate, solvable goals that the agent should attempt to acquire skills, eventually enabling it to achieve the target goal.
We begin by presenting an intuitive explanation of the DISCOVER objective, followed by a discussion of its tight connection to the exploration-exploitation dilemma.
We argue that, to efficiently learn to solve hard tasks, an agent must adhere to the following three fundamental principles:
\begin{equation}
\textbf{{Goal utility}} = \textbf{{Achievability}}+\textbf{{Novelty}}+\textbf{{Relevance}}
\end{equation}
Each of these principles is necessary to efficiently learn to solve a hard task.
First, \textbf{achievability} ensures that a task is not ``too hard'' for the agent to ever achieve it, hence, representing meaningful experience.
Second, \textbf{novelty} ensures that a task is not ``too easy'', such that the agent's new experience on this task is a useful learning signal to increase the policy's capabilities.
Finally, \textbf{relevance} ensures that experience on a task is useful for eventually solving the target task~(cf.~\cref{fig:main}).
Prior methods~\citep[e.g.,][]{mega} have successfully combined achievability and novelty, but are aiming to achieve all possible tasks.\looseness=-1

After providing an intuitive discussion of the principles relevant to efficiently learning hard tasks, the question remains: how can they be quantified in practice? We propose the DISCOVER objective, which quantifies each of the discussed principles via the value estimate under the current policy.
In each exploration episode, we select a goal from the achieved ones ($\mathcal{G}_\ach$) according to\looseness=-1
\begin{tcolorbox}[colback=gray!10,
                  colframe=gray!10,
                  arc=5pt,
                  boxrule=0pt]
\vspace{-8pt}\begin{align}
g_t = \argmax_{g\in\mathcal{G}_{\ach}} \,\, \alpha_t \Big[ \underbrace{V(s_0,g)}_{\textbf{{Achievability}}} + \, \beta_t \underbrace{\sigma(s_0,g)}_{\textbf{{Novelty}}} \Big] \,\,+\,\, (1-\alpha_t) \Big[ \underbrace{V(g,g^\star) + \beta_t \, \sigma(g,g^\star)}_{\textbf{{Relevance}}} \Big] \label{eq:discover}
\end{align}
\end{tcolorbox}
where $\alpha_t$ and $\beta_t$ are schedulable coefficients, $V$ is the mean of an ensemble of the optimal value function defined in Equation \ref{eq:value}, and $\sigma^2$ its variance.\footnote{See \cref{appendix:implementation_details:value_estimation}.}
A high value $V(s_0,g)$ indicates that the policy can reach the goal $g$ from the starting state $s_0$, therefore promoting \textbf{achievability}.
In contrast, a large uncertainty $\sigma(s_0,g)$ indicates that the agent is not reliably reaching the goal $g$. Hence, attempting such goals prioritizes \textbf{novel} experiences.
Finally, a high value $V(g,g^\star)$ indicates that $g$ is ``closely related'' to the target goal $g^\star$.
The \textbf{relevance} term of DISCOVER prioritizes goals that \emph{might} be closely related to the target goal, either since they already have a high value $V(g,g^\star)$ or because the agent is still uncertain about this value.
The relevance term can also be viewed as directing the goal selection towards the final target $g^\star$. %
Note that while we focus on a single target $g^\star$, DISCOVER is easily extended to distributions over targets by maximizing \cref{eq:discover} in expectation.\looseness=-1

We emphasize that all terms in \cref{eq:discover} are estimated by the critic and do not rely on any prior information.
In particular, the relevance estimates are entirely bootstrapped from the agent's current knowledge.
While the above gives an intuitive introduction, DISCOVER can be interpreted as an agent seeking to maximize the likelihood of reaching the target goal $g^\star$, as we describe in Appendix \ref{app:connection}.\looseness=-1

\subsection{Automatic Online Parameter Adaptation}\label{adaptation}

Prior work has explored adapting the parameters online to meet predefined metrics, such as the entropy of the policy \citep{sacgorithmsapplications, sukhija2024maxinforl}.
In the same spirit, we propose a simple online parameter adaptation strategy that adjusts $\alpha_t$ and $\beta_t$ to maintain a fixed target goal achievement rate $p^\star$, similarly to the cutoff strategy proposed in~\citet{mega}.
This approach aligns with the previously discussed intuition, as it ensures that the agent neither selects goals that are ``too easy'' nor ones that are ``too hard''.
We find that it is sufficient to linearly increase $\alpha_t$ when the agent achieved too few of its previously selected goals, and linearly decrease $\alpha_t$, when the agent achieved too many, while setting $\beta_t=1$:\looseness=-1
\begin{equation}
\alpha_{t+1} = \Pi_{[0,1]}(\alpha_t + \eta (p_t-p^\star)).
\end{equation}
Here, $p_t$ denotes the average goal achievement rate over the last $k_\text{adapt}$ episodes, $\smash{\Pi_{[a,b]}}$ clips the output value to the interval $[a,b]$, and $\eta > 0$ is the adaptation step size, which we set to $0.01$.
We ablate the effects of our adaptation strategy in \cref{fig:adaptation_ablation} in the appendix, and find that the optimal target goal achievement rate $p^\star$ is approximately $50\%$, consistent with findings in prior work~\citep{mega}.\looseness=-1

\subsection{Connection to the Exploration-Exploitation Dilemma}\label{theory}

Balancing exploration and exploitation is key when proposing tasks such that the resulting experience is valuable toward solving the target task.
DISCOVER balances exploration and exploitation in two ways.
First, in its estimate of $V(s_0,g)$, which leads to a trade-off between selecting achievable and novel goals.
Second, in its estimate of $V(g,g^\star)$, where exploitation leads to directing goal selection toward the target goal while exploration prevents the agent from overfitting to an overconfident estimate of direction.\looseness=-1

\paragraph{Theoretical guarantee for DISCOVER}
To illustrate the underlying exploration-exploitation trade-off, we consider a simplified setting and use the link between DISCOVER and UCB to prove rates for the number of episodes until the DISCOVER agent can achieve the target goal.
We make the following simplifying regularity assumptions, analogously to the linear bandit setting~\citep[see, e.g.,][]{abbasi2011improved,chowdhury2017kernelized}:\looseness=-1

\begin{informal_assumption}[formalized in \cref{asm:model,asm:feedback,asm:estimates,asm:optimal_path,asm:expansion}]\label{informal_assumptions}
    \leavevmode\vspace{-2pt}\begin{enumerate}[topsep=0pt, itemsep=-2pt, leftmargin=20pt]
        \item \emph{Optimal value functions are linear in latent features:} Among the set of achievable\footnote{The set of \emph{achievable} goals is defined in Assumption 4.} goals $g$, the value functions $V^\star(s_0, g)$ and $V^\star(g,g^\star)$ are linear in a known $d$-dimensional feature space.
        Note that the above only needs to hold for currently achievable goals.

        \item \emph{Noisy feedback:} In episode $t$, selecting any achievable $g_t$, the agent receives noisy feedback of the optimal value functions, with the noise being conditionally sub-Gaussian.

        \item \emph{Goal space contains optimal paths:} For any goal $g$, the optimal path from $g$ to the target goal $g^\star$ is contained in the goal space $\spG$.\footnote{That is, $\spG$ is $g^\star$-geodesically convex under the forward quasimetric induced by the optimal value function.}

        \item \emph{Goal achievability:} Based on our previous observation that $\alpha$ controls the rate of goal-achieval, we let the probability of achieving any \emph{achievable} goal $g$ be at least $\alpha$.
        Any goal $g$ is considered \emph{achievable} in episode $t$ if the agent previously achieved a goal $g_{t'}$ ($t' < t$) which is within distance $(1-\alpha) \kappa$ of $g$ (under the optimal value function).
        Here, $\kappa > 0$ is a rate of expansion.
    \end{enumerate}
\end{informal_assumption}

Assumptions 1 and 2 are standard in the literature on linear bandits.
Note, however, that in our setting this feedback model implicitly assumes feedback on $V^\star(g_t,g^\star)$, which can be thought of as a generalization condition:
If the agent cannot generalize from its experience from $s_0$ to $g$ about the relation of $g$ and $g^\star$, then it may never reach the target goal $g^\star$.
Another implicit consequence of the linearity assumption is that the bias and non-stationarity of the value function estimates is controlled, which is not commonly the case when learning value functions with neural networks via bootstrapping.
Assumption 3 is a continuity assumption on the goal space, which is satisfied in many practical settings.
Finally, Assumption 4 leads to a trade-off in choosing $\alpha$: A small $\alpha$ leads to a larger set of achievable goals, but at a lower goal-achieval rate. In contrast, a large $\alpha$ leads to a more conservative set of achievable goals, but at a higher goal-achieval rate.
This trade-off matches the empirical effect of $\alpha$~(cf.~\cref{fig:adaptation_ablation} in the appendix).
The parameter $\kappa$ controls the rate of expansion of the set of achievable goals.\looseness=-1

In the above setting, we bound the number of episodes until DISCOVER reaches the target goal $g^\star$:\looseness=-1

\begin{informal_theorem}[formalized in \cref{thm:main}]\label{informal_thm:main}
    Fix any confidence $\delta \in (0,1)$ and any $\smash{\alpha \in (0, \frac{1}{2})}$.
    Let the above assumptions hold.
    We denote by $\smash{D = -V^\star(s_0, g^\star)}$ the distance from the initial state $s_0$ to the target goal $g^\star$ under the optimal policy.
    Then, with probability $1-\delta$, selecting goals $g_t$ with DISCOVER (with $\alpha_t = \alpha$ and $\beta_t$ chosen appropriately), the number of episodes $N$ until the target goal $g^\star$ is achievable by the agent is bounded by \begin{align*}
        N \leq \widetilde{O}\parentheses*{\frac{D d^2}{\alpha (1-2\alpha)^2 (1-\alpha)^3 \kappa^3}} = \widetilde{O}\parentheses*{\frac{D d^2}{\kappa^3}}.
    \end{align*}
\end{informal_theorem}

This theorem shows that DISCOVER efficiently learns to solve hard, initially unachievable tasks.
With a larger dimensionality $d$ of the feature space, the learning task becomes harder, which is reflected in the bound.
Similarly, the larger the distance $D$ from the initial state to the target goal, the longer it takes to reach the target goal.
In contrast, the larger the rate of expansion $\kappa$, the faster the agent reaches the target goal, since the set of achievable goals expands faster.
Finally, the bound suggests choosing $\alpha \approx 0.1$~(cf.~\cref{fig:theory_alpha} in the appendix) which roughly matches the average~$\alpha$ chosen in experiments by our adaptation strategy~(cf.~\cref{fig:average_alpha} in the appendix).\looseness=-1

Despite our simplifying assumptions, achieving the target goal~$g^\star$ remains non-trivial.
The agent must trade off learning the value function against exploiting the direction this value suggests toward~$g^\star$.
Overconfidence in the value estimate may lead the agent to diverge and never reach~$g^\star$.
On the other hand, if the agent is too conservative in its value estimate (or does not use the value estimate for direction at all), it may never reach the target goal within a reasonable time.
To see why guidance matters, imagine an $m$-dimensional goal space that is a ball of radius $R$.
Covering that ball with hypercubes of side $\epsilon$ requires on the order of $(R/\epsilon)^m$ cells, exponential in $m$.
This back-of-the-envelope calculation indicates that to achieve hard goals in a high-dimensional goal space, undirected exploration is insufficient, and the agent must balance exploration and exploitation.
In doing so, DISCOVER avoids this curse of dimensionality and exploits the learned value to stay close to a nearly one-dimensional corridor from $s_0$ to $g^\star$.
For this reason, the bound in \cref{informal_thm:main} depends \emph{solely} on the (1-dim)~distance $D$ and \emph{not} on the total volume of the goal space $\spG$.
Bounds of previous work~\citep{adagoal} depend on the total volume of $\spG$, and quickly become vacuous in high-dimensional problems.\looseness=-1

\section{Results}

We evaluate the empirical performance of DISCOVER across three complex, sparse-reward, long-horizon control tasks, highlighting five main insights.
Unless mentioned otherwise, we employ the TD3~\citep{td3} actor-critic algorithm for training all agents.
While our evaluation focuses on model-free methods, the goal-based directed exploration of DISCOVER can also be used with model-based backbones such as Dreamer~\citep{hafner2020dream,hafner2021mastering,hafner2025mastering}.
For all experiments, we report the mean performance across $10$ seeds along with its standard error.
Additional implementation details, hyperparameter choices and experimental results are reported in \cref{appendix:implementation_details,appendix:hyperparameters,appendix:experiments}, respectively.
The code is available at \url{https://github.com/LeanderDiazBone/discover}.\looseness=-1

\paragraph{Environments}
For our experiments, we use the JaxGCRL library \citep{jax_gcrl} to assess performance on challenging, high-dimensional navigation and manipulation tasks.
Specifically, we evaluate on the \texttt{antmaze} environment, where a simulated quadruped with a $27$-dimensional state space and an $8$-dimensional action space must learn to navigate through a maze to reach a target location.
For manipulation, we consider the \texttt{arm} environment, which features a $23$-dimensional state and a $5$-dimensional action space. In this task, a robotic arm must move a cube to a specified target location, potentially while avoiding obstacles.
Additionally, we evaluate on randomly generated \texttt{pointmazes} of varying dimensionality, to assess the capabilities of different goal selection strategies to explore high-dimensional goal spaces efficiently. We construct these $n$-dimensional \texttt{pointmazes} by randomly generating paths in the $n$-dimensional hypercube until the target location was reached sufficiently often from the starting location.
For all environments, we consider both a simple and a hard configuration, where the latter is characterized by longer horizons, more complex obstacles, or higher-dimensional goal spaces.\looseness=-1

\paragraph{Goal selection baselines}

We compare DISCOVER with the following baselines.
\begin{enumerate}[label=\arabic*., leftmargin=20pt, itemsep=0pt, topsep=0pt]
    \item Hindsight Experience Replay (HER): The HER goal selection strategy~\citep{her} always selects goals from the target goal distribution. Since we consider a single fixed target goal, the goal at time $t$ is set as $g_t = g^\star$.
    \item DISCERN (uniform): The uniform goal selection strategy~\citep{discern} samples goals uniformly from the support of the achieved goal distribution, i.e., $g_t\sim\text{Unif}(\text{supp}(p_\ach))$. The achieved goal distribution $p_\ach$ is modeled using a kernel density estimator based on randomly sampled previously achieved goals.
    \item Maximum Entropy Gain Achievement (MEGA): The MEGA goal selection strategy \citep{mega} selects goals with the lowest likelihood from the set of achievable goals, i.e., $g_t=\argmin_{g\in\spG_\ach}p_\ach(g)$. In contrast to DISCOVER, MEGA additionally defines a goal $g$ as achievable if its value function $V(s_0, g)$ exceeds a threshold, which is adapted dynamically based on the goal achievement rate over recent episodes. %
    \item Achievability + Novelty: This baseline corresponds to the undirected components of DISCOVER, i.e., $g_t= \argmax_{g\in\spG_\ach} V(s_0,g)+\beta_t\sigma(s_0,g)$.
\end{enumerate}

\begin{figure}[t]
    \centering
    \vspace{-1ex}
    \incplt[\textwidth]{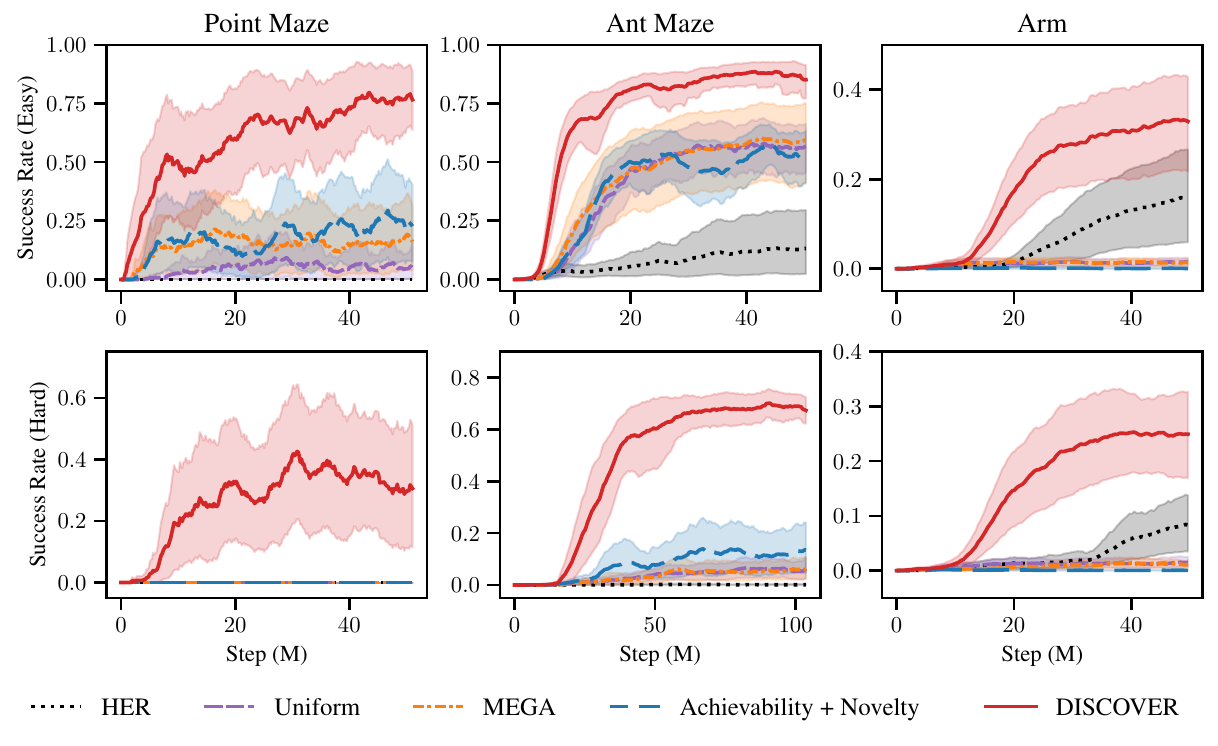}
    \vspace{-2ex}
    \caption{Comparison of the success rates on the target task over the course of training in the \texttt{pointmaze}, \texttt{antmaze} \& \texttt{arm} environments. We compare DISCOVER to other strategies for goal selection. We consider two difficulty levels for each environment. We find that the DISCOVER agents learn to solve difficult target tasks significantly faster than the baselines.}%
    \label{fig:main-results}
    \vspace{-1ex}
\end{figure}

\paragraph{Insight 1: DISCOVER outperforms state-of-the-art goal selection strategies in complex control environments.}
As shown in \cref{fig:main-results}, DISCOVER consistently outperforms all baseline goal selection strategies across the evaluated environments. The performance gains are particularly notable in the more challenging variants of each task, where emphasizing \emph{{relevance}} proves critical for learning optimal policies.
Interestingly, in the \texttt{arm} environment, HER performs better than some undirected goal selection strategies. We attribute this to the unconstrained nature of the goal space in the \texttt{arm} environment, which leads undirected methods to explore irrelevant regions that do not contribute to learning about the target goal.\looseness=-1

\begin{wraptable}{r}{0.4\textwidth}
\vspace{-2.5ex}
\renewcommand{\arraystretch}{1.2}
\small
\centering
\setlength{\tabcolsep}{4.5pt}
\begin{tabular}{lrrrrrrr}
\toprule
Dimension & 2 & 3 & 4 & 5 & 6 \\
\midrule
HER & $\infty$ & $\infty$ & $\infty$ & $\infty$ & $\infty$  \\
MEGA & 4.8 & $\infty$ & $\infty$ & $\infty$ & $\infty$ \\
Ach. + Nov. & 5.2 & $\infty$ & $\infty$ & $\infty$ & $\infty$ \\
\rowcolor{lightgray}
DISCOVER & \textbf{2.9} & \textbf{3.1} & \textbf{7.4} & \textbf{5.4} & \textbf{18.7} \\
\bottomrule
\end{tabular}
\caption{Comparison of the required number of steps (M) for reaching $10\%$ target goal achievement in \texttt{pointmazes} of varying dimension. $\infty$ denotes no achievement of $10\%$ success rate before termination after $50$M steps. DISCOVER scales to large mazes due to its awareness of direction and uncertainty.}
 \label{fig:high-dim}
\vspace{-3ex}
\end{wraptable}

\paragraph{Insight 2: Undirected goal selection is insufficient for high-dimensional search spaces.}
We next evaluate the performance of goal selection strategies on \texttt{pointmaze} navigation tasks of varying dimensionality.
\Cref{fig:high-dim} demonstrates that the undirected goal selection strategies are eventually successful in two dimensions, but consistently fail in dimension larger than three.
In contrast, DISCOVER, by focusing on the most relevant directions, successfully solves mazes in up to six dimensions.
The substantial difference in empirical performance can be explained by the size of the search space explored by each method.
For undirected methods, this space grows exponentially with dimension, while DISCOVER mitigates this challenge by only selecting goals that likely to lead to the final objective, thereby dramatically reducing the effective search space.
While the performance gap is most pronounced in complex, high-dimensional tasks, we demonstrate that DISCOVER also improves performance in standard sparse-reward environments.\looseness=-1 %

\begin{figure}[t]
    \center
    \vspace{-10pt}
    \includegraphics[width=\linewidth]{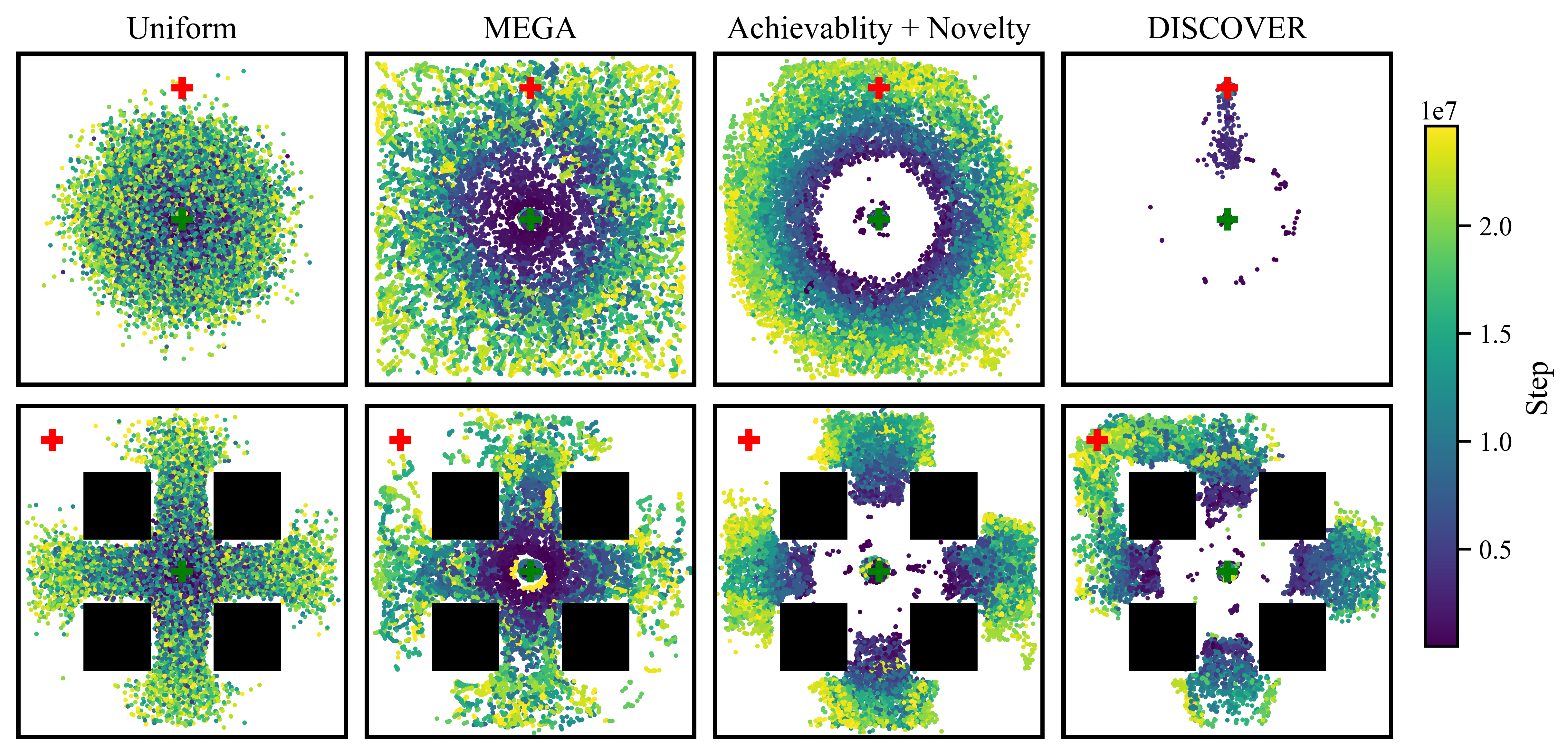}
    \caption{Visualization of the selected goals of different goal selection strategies during the first 25M steps on the \texttt{antmaze} environment, colored by time step. DISCOVER balances exploring the environment with exploiting the agent's sense of direction to select goals relevant to the target task.}
    \label{fig:goal_selection}
\end{figure}
\paragraph{Insight 3: DISCOVER improves performance by selecting relevant goals towards reaching the target, while exploring sufficiently.}
We visualize the goal selection behavior of DISCOVER and the baseline strategies in the \texttt{antmaze} environment in \cref{fig:goal_selection}. While all baseline methods perform undirected goal selection, exploring the entire state space, DISCOVER, after an initial exploration phase, quickly identifies the correct direction and subsequently focuses its goal selection on the relevant region. These plots explain the improved performance, as DISCOVER can select (and achieve) goals at the target much earlier than the other strategies. We provide a detailed evaluation of how the different components of DISCOVER influence goal selection in \cref{appendix:experiments}~(cf.~\cref{fig:value_function}).

\clearpage

\begin{wrapfigure}{r}{0.4\linewidth}
    \vspace{-2ex}
    \centering
    \incplt[\linewidth]{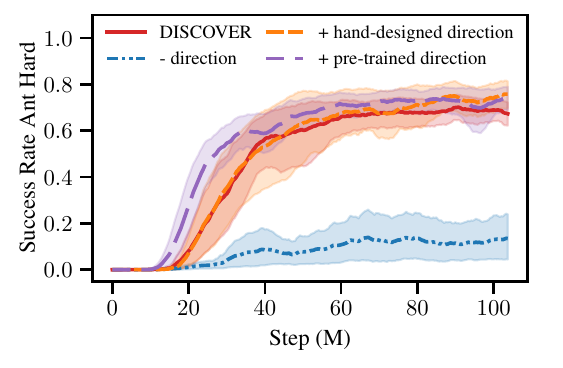}
    \vspace{-3ex}
    \caption{Comparison of using different strategies to determine direction, which replace the $V(g,g^\star)$ term in the DISCOVER objective. Hand-designed direction: $\|g-g^\star\|_2$; pre-trained direction: critic from training in a \texttt{pointmaze} environment with the same maze layout.}
    \vspace{-2ex}
    \label{fig:direction-prior}
\end{wrapfigure}
\paragraph{Insight 4: DISCOVER can leverage prior knowledge to further accelerate exploration.}
In many scenarios, prior information about the environment is available, for example through pre-training on a related environment.
To evaluate whether DISCOVER can leverage prior knowledge for exploration, we integrate prior information on {\textit{relevance}} by substituting the prior for $V(g,g^\star)$ in \cref{eq:discover}.
We evaluate two kinds of priors: (1) a hand-designed prior, leveraging human knowledge akin to reward shaping, and (2) a pre-trained prior from a similar environment.
We use the \texttt{antmaze} (hard) environment, picking as a natural hand-designed prior the $L_2$-distance to the target goal $g^\star$ (ignoring obstacles), and as the pre-trained prior a value function learned on a \texttt{pointmaze} with the same layout.
We find in \cref{fig:direction-prior} that DISCOVER with a prior can explore marginally faster than bootstrapping value estimates from scratch.
In particular, in the maze with obstacles, using the pre-trained prior accelerates exploration, since the agent explores only in the direction of the target~(cf.~\cref{fig:prior_selection_square} in the appendix).
We expect that this benefit of priors increases in even harder tasks, since DISCOVER explores only the difficult and novel aspects of the target task.\looseness=-1

\begin{wrapfigure}[17]{r}{0.4\linewidth}
    \vspace{-4ex}
    \centering
    \incplt[\linewidth]{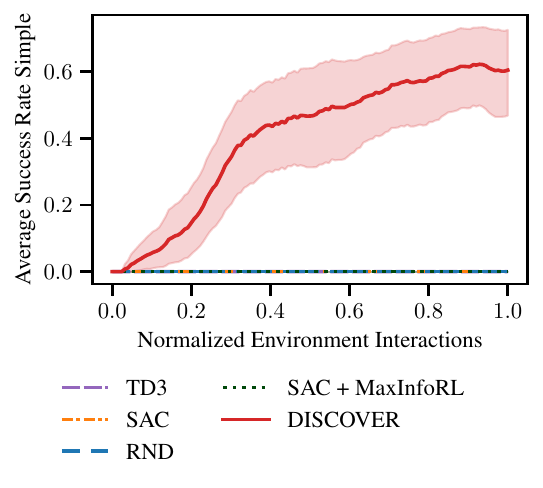}
    \vspace{-3ex}
    \caption{
    State-of-the-art RL algorithms fail to achieve the target tasks even in their ``simple'' configuration.} %
    \vspace{-5ex}
    \label{fig:standard_rl}
\end{wrapfigure}
\paragraph{Insight 5: Subgoal selection enables deep exploration.}
Finally, we evaluate the importance of goal-conditioning for solving sparse-reward tasks.
To this end, we compare DISCOVER to methods for exploration in RL that do not explicitly select exploratory intermediate goals~\citep{td3,haarnoja2018softactorcriticoffpolicymaximum,bootstrapped_dqn}.
This includes state-of-the-art approaches based on curiosity and reward shaping~\citep{burda2018exploration,sukhija2024maxinforl}.
We show in \cref{fig:standard_rl} that even in the simpler environments, these standard methods are unable to solve the task.
In \cref{fig:rl_baselines_exploration} of the appendix, we visualize the states visited by the non-goal-conditioned baselines.
While methods leveraging curiosity to shape the reward explore faster than others, none of them reaches the target task within our maximum number of episodes.
This highlights that in long-horizon, sparse-reward settings, directed goal selection can facilitate deep exploration, which is essential for solving complex tasks.\looseness=-1

\section{Conclusion}
\vspace{-3pt}

In this work, we introduce DISCOVER, a goal selection method for solving challenging tasks by balancing \textit{novelty}, \textit{achievability}, and \textit{relevance}.
DISCOVER is closely related to principled methods for balancing exploration and exploitation, and we theoretically show that it efficiently reaches the target goal in a simplified linear bandit setting.
We further empirically evaluate DISCOVER on various complex control tasks and find that it consistently outperforms prior state-of-the-art exploration strategies in RL in solving difficult, sparse-reward tasks.\looseness=-1

A limitation of DISCOVER is that it relies on bootstrapped estimates of the value function, which depend on the ability of the value network to generalize.
Additionally, our method incurs overhead by training an ensemble of critics for uncertainty estimation, potentially limiting its direct applicability to implementations involving large critic networks such as in language domains.
We believe that exploring alternative means of efficient uncertainty estimation for DISCOVER is an exciting direction for future work.
Furthermore, we hypothesize that directed exploration is especially advantageous in high-dimensional and complex goal spaces, as demonstrated for the \texttt{pointmaze} with variable dimensions.
Therefore, particularly interesting is the application of DISCOVER to problems with highly complex goal spaces, such as in mathematics or programming with large language model priors.
Finally, our work also highlights important directions for future research, including enabling the generation (rather than selection) of goals, extending DISCOVER to hierarchical planning, and reusing experience from one target task for future tasks.\looseness=-1

\section*{Acknowledgments}
We would like to thank Bhavya Sukhija, Yarden As, and Nico Gürtler for feedback on early versions of the paper.
Marco Bagatella is supported by the Max Planck ETH Center for Learning Systems.
This project was supported in part by the European Research Council (ERC) under the European Union’s Horizon 2020 research and Innovation Program Grant agreement no.~815943,
and the Swiss National Science Foundation under NCCR Automation, grant agreement~51NF40~180545.\looseness=-1

\bibliographystyle{plainnat}
\bibliography{ref.bib}

\begin{thebibliography}{88}
\providecommand{\natexlab}[1]{#1}
\providecommand{\url}[1]{\texttt{#1}}
\expandafter\ifx\csname urlstyle\endcsname\relax
  \providecommand{\doi}[1]{doi: #1}\else
  \providecommand{\doi}{doi: \begingroup \urlstyle{rm}\Url}\fi

\bibitem[Abbasi-Yadkori(2013)]{abbasi2013online}
Yasin Abbasi-Yadkori.
\newblock \emph{Online learning for linearly parametrized control problems}.
\newblock PhD thesis, University of Alberta, 2013.

\bibitem[Abbasi-Yadkori et~al.(2011)Abbasi-Yadkori, P{\'a}l, and Szepesv{\'a}ri]{abbasi2011improved}
Yasin Abbasi-Yadkori, D{\'a}vid P{\'a}l, and Csaba Szepesv{\'a}ri.
\newblock Improved algorithms for linear stochastic bandits.
\newblock In \emph{NeurIPS}, 2011.

\bibitem[Agarwal et~al.(2023)Agarwal, Durugkar, Stone, and Zhang]{agarwal2023f}
Siddhant Agarwal, Ishan Durugkar, Peter Stone, and Amy Zhang.
\newblock f-policy gradients: A general framework for goal-conditioned rl using f-divergences.
\newblock In \emph{NeurIPS}, 2023.

\bibitem[Aky{\"u}rek et~al.(2024)Aky{\"u}rek, Damani, Zweiger, Qiu, Guo, Pari, Kim, and Andreas]{akyurek2024surprising}
Ekin Aky{\"u}rek, Mehul Damani, Adam Zweiger, Linlu Qiu, Han Guo, Jyothish Pari, Yoon Kim, and Jacob Andreas.
\newblock The surprising effectiveness of test-time training for few-shot learning.
\newblock \emph{arXiv preprint arXiv:2411.07279}, 2024.

\bibitem[Andrychowicz et~al.(2017)Andrychowicz, Wolski, Ray, Schneider, Fong, Welinder, McGrew, Tobin, Abbeel, and Zaremba]{her}
Marcin Andrychowicz, Filip Wolski, Alex Ray, Jonas Schneider, Rachel Fong, Peter Welinder, Bob McGrew, Josh Tobin, Pieter Abbeel, and Wojciech Zaremba.
\newblock Hindsight experience replay.
\newblock In \emph{NeurIPS}, 2017.

\bibitem[Bagatella et~al.(2025)Bagatella, Hübotter, Martius, and Krause]{bagatella2025active}
Marco Bagatella, Jonas Hübotter, Georg Martius, and Andreas Krause.
\newblock Active fine-tuning of multi-task policies.
\newblock In \emph{ICML}, 2025.

\bibitem[Berner et~al.(2019)Berner, Brockman, Chan, Cheung, D{\k{e}}biak, Dennison, Farhi, Fischer, Hashme, Hesse, et~al.]{berner2019dota}
Christopher Berner, Greg Brockman, Brooke Chan, Vicki Cheung, Przemys{\l}aw D{\k{e}}biak, Christy Dennison, David Farhi, Quirin Fischer, Shariq Hashme, Chris Hesse, et~al.
\newblock Dota 2 with large scale deep reinforcement learning.
\newblock \emph{arXiv preprint arXiv:1912.06680}, 2019.

\bibitem[Bertolissi et~al.(2025)Bertolissi, Hübotter, Hakimi, and Krause]{bertolissi2025local}
Ryo Bertolissi, Jonas Hübotter, Ido Hakimi, and Andreas Krause.
\newblock Local mixtures of experts: Essentially free test-time training via model merging.
\newblock \emph{arXiv preprint arXiv:2505.14136}, 2025.

\bibitem[Bortkiewicz et~al.(2025)Bortkiewicz, Pałucki, Myers, Dziarmaga, Arczewski, Łukasz Kuciński, and Eysenbach]{jax_gcrl}
Michał Bortkiewicz, Władysław Pałucki, Vivek Myers, Tadeusz Dziarmaga, Tomasz Arczewski, Łukasz Kuciński, and Benjamin Eysenbach.
\newblock Accelerating goal-conditioned rl algorithms and research.
\newblock In \emph{ICLR}, 2025.

\bibitem[Burda et~al.(2019)Burda, Edwards, Storkey, and Klimov]{burda2018exploration}
Yuri Burda, Harrison Edwards, Amos Storkey, and Oleg Klimov.
\newblock Exploration by random network distillation.
\newblock In \emph{ICLR}, 2019.

\bibitem[Cho et~al.(2023{\natexlab{a}})Cho, Lee, and Kim]{cho2023diversify}
Daesol Cho, Seungjae Lee, and H~Jin Kim.
\newblock Diversify \& conquer: Outcome-directed curriculum rl via out-of-distribution disagreement.
\newblock In \emph{NeurIPS}, 2023{\natexlab{a}}.

\bibitem[Cho et~al.(2023{\natexlab{b}})Cho, Lee, and Kim]{cho2023outcome}
Daesol Cho, Seungjae Lee, and H~Jin Kim.
\newblock Outcome-directed reinforcement learning by uncertainty \& temporal distance-aware curriculum goal generation.
\newblock In \emph{ICLR}, 2023{\natexlab{b}}.

\bibitem[Chowdhury and Gopalan(2017)]{chowdhury2017kernelized}
Sayak~Ray Chowdhury and Aditya Gopalan.
\newblock On kernelized multi-armed bandits.
\newblock In \emph{ICML}, 2017.

\bibitem[Curi et~al.(2020)Curi, Berkenkamp, and Krause]{curi2020efficientmodelbasedreinforcementlearning}
Sebastian Curi, Felix Berkenkamp, and Andreas Krause.
\newblock Efficient model-based reinforcement learning through optimistic policy search and planning.
\newblock In \emph{NeurIPS}, 2020.

\bibitem[Dalal et~al.(2025)Dalal, Koceja, Hussein, Xu, Zhao, Song, Han, Cheung, Kautz, Guestrin, et~al.]{dalal2025one}
Karan Dalal, Daniel Koceja, Gashon Hussein, Jiarui Xu, Yue Zhao, Youjin Song, Shihao Han, Ka~Chun Cheung, Jan Kautz, Carlos Guestrin, et~al.
\newblock One-minute video generation with test-time training.
\newblock \emph{arXiv preprint arXiv:2504.05298}, 2025.

\bibitem[Dayan and Hinton(1992)]{feudal_rl}
Peter Dayan and Geoffrey~E Hinton.
\newblock Feudal reinforcement learning.
\newblock In \emph{NeurIPS}, 1992.

\bibitem[Dennis et~al.(2020)Dennis, Jaques, Vinitsky, Bayen, Russell, Critch, and Levine]{dennis2020emergent}
Michael Dennis, Natasha Jaques, Eugene Vinitsky, Alexandre Bayen, Stuart Russell, Andrew Critch, and Sergey Levine.
\newblock Emergent complexity and zero-shot transfer via unsupervised environment design.
\newblock In \emph{NeurIPS}, 2020.

\bibitem[Durugkar et~al.(2021)Durugkar, Tec, Niekum, and Stone]{durugkar2021adversarial}
Ishan Durugkar, Mauricio Tec, Scott Niekum, and Peter Stone.
\newblock Adversarial intrinsic motivation for reinforcement learning.
\newblock In \emph{NeurIPS}, 2021.

\bibitem[Eberhard et~al.(2023)Eberhard, Hollenstein, Pinneri, and Martius]{eberhard2023pink}
Onno Eberhard, Jakob Hollenstein, Cristina Pinneri, and Georg Martius.
\newblock Pink noise is all you need: Colored noise exploration in deep reinforcement learning.
\newblock In \emph{ICLR}, 2023.

\bibitem[Ecoffet et~al.(2021)Ecoffet, Huizinga, Lehman, Stanley, and Clune]{go_explore}
Adrien Ecoffet, Joost Huizinga, Joel Lehman, Kenneth~O. Stanley, and Jeff Clune.
\newblock First return, then explore.
\newblock \emph{Nature}, 590\penalty0 (7847), 2021.

\bibitem[Eysenbach et~al.(2019)Eysenbach, Gupta, Ibarz, and Levine]{diversity_all_you_need}
Benjamin Eysenbach, Abhishek Gupta, Julian Ibarz, and Sergey Levine.
\newblock Diversity is all you need: Learning skills without a reward function.
\newblock In \emph{ICLR}, 2019.

\bibitem[Eysenbach et~al.(2022)Eysenbach, Zhang, Salakhutdinov, and Levine]{contrastive_learning_goal_conditioned}
Benjamin Eysenbach, Tianjun Zhang, Ruslan Salakhutdinov, and Sergey Levine.
\newblock Contrastive learning as goal-conditioned reinforcement learning.
\newblock In \emph{NeurIPS}, 2022.

\bibitem[Florensa et~al.(2018)Florensa, Held, Geng, and Abbeel]{adversarial_goal_generation}
Carlos Florensa, David Held, Xinyang Geng, and Pieter Abbeel.
\newblock Automatic goal generation for reinforcement learning agents.
\newblock In \emph{ICML}, 2018.

\bibitem[Fujimoto et~al.(2018)Fujimoto, van Hoof, and Meger]{td3}
Scott Fujimoto, Herke van Hoof, and David Meger.
\newblock Addressing function approximation error in actor-critic methods.
\newblock In \emph{ICML}, 2018.

\bibitem[Gehring et~al.(2021)Gehring, Synnaeve, Krause, and Usunier]{gehring2021hierarchical}
Jonas Gehring, Gabriel Synnaeve, Andreas Krause, and Nicolas Usunier.
\newblock Hierarchical skills for efficient exploration.
\newblock In \emph{NeurIPS}, 2021.

\bibitem[Guo et~al.(2025)Guo, Yang, Zhang, Song, Zhang, Xu, Zhu, Ma, Wang, Bi, et~al.]{guo2025deepseek}
Daya Guo, Dejian Yang, Haowei Zhang, Junxiao Song, Ruoyu Zhang, Runxin Xu, Qihao Zhu, Shirong Ma, Peiyi Wang, Xiao Bi, et~al.
\newblock Deepseek-r1: Incentivizing reasoning capability in llms via reinforcement learning.
\newblock \emph{arXiv preprint arXiv:2501.12948}, 2025.

\bibitem[Haarnoja et~al.(2018{\natexlab{a}})Haarnoja, Zhou, Abbeel, and Levine]{haarnoja2018softactorcriticoffpolicymaximum}
Tuomas Haarnoja, Aurick Zhou, Pieter Abbeel, and Sergey Levine.
\newblock Soft actor-critic: Off-policy maximum entropy deep reinforcement learning with a stochastic actor.
\newblock In \emph{ICML}, 2018{\natexlab{a}}.

\bibitem[Haarnoja et~al.(2018{\natexlab{b}})Haarnoja, Zhou, Hartikainen, Tucker, Ha, Tan, Kumar, Zhu, Gupta, Abbeel, et~al.]{sacgorithmsapplications}
Tuomas Haarnoja, Aurick Zhou, Kristian Hartikainen, George Tucker, Sehoon Ha, Jie Tan, Vikash Kumar, Henry Zhu, Abhishek Gupta, Pieter Abbeel, et~al.
\newblock Soft actor-critic algorithms and applications.
\newblock \emph{arXiv preprint arXiv:1812.05905}, 2018{\natexlab{b}}.

\bibitem[Hafner et~al.(2020)Hafner, Lillicrap, Ba, and Norouzi]{hafner2020dream}
Danijar Hafner, Timothy Lillicrap, Jimmy Ba, and Mohammad Norouzi.
\newblock Dream to control: Learning behaviors by latent imagination.
\newblock In \emph{ICLR}, 2020.

\bibitem[Hafner et~al.(2021)Hafner, Lillicrap, Norouzi, and Ba]{hafner2021mastering}
Danijar Hafner, Timothy Lillicrap, Mohammad Norouzi, and Jimmy Ba.
\newblock Mastering atari with discrete world models.
\newblock In \emph{ICLR}, 2021.

\bibitem[Hafner et~al.(2025)Hafner, Pasukonis, Ba, and Lillicrap]{hafner2025mastering}
Danijar Hafner, Jurgis Pasukonis, Jimmy Ba, and Timothy Lillicrap.
\newblock Mastering diverse control tasks through world models.
\newblock \emph{Nature}, 2025.

\bibitem[Hansen et~al.(2021)Hansen, Jangir, Sun, Aleny{\`a}, Abbeel, Efros, Pinto, and Wang]{hansen2021self}
Nicklas Hansen, Rishabh Jangir, Yu~Sun, Guillem Aleny{\`a}, Pieter Abbeel, Alexei~A Efros, Lerrel Pinto, and Xiaolong Wang.
\newblock Self-supervised policy adaptation during deployment.
\newblock In \emph{ICLR}, 2021.

\bibitem[Hardt and Sun(2024)]{hardt2024test}
Moritz Hardt and Yu~Sun.
\newblock Test-time training on nearest neighbors for large language models.
\newblock In \emph{ICLR}, 2024.

\bibitem[Hennig and Schuler(2012)]{hennig2012entropy}
Philipp Hennig and Christian~J Schuler.
\newblock Entropy search for information-efficient global optimization.
\newblock \emph{JMLR}, 2012.

\bibitem[Hu et~al.(2023)Hu, Chang, Rybkin, and Jayaraman]{peg}
Edward~S. Hu, Richard Chang, Oleh Rybkin, and Dinesh Jayaraman.
\newblock Planning goals for exploration.
\newblock In \emph{ICLR}, 2023.

\bibitem[H{\"u}botter et~al.(2025)H{\"u}botter, Bongni, Hakimi, and Krause]{hubotter2024efficiently}
Jonas H{\"u}botter, Sascha Bongni, Ido Hakimi, and Andreas Krause.
\newblock Efficiently learning at test-time: Active fine-tuning of llms.
\newblock In \emph{ICLR}, 2025.

\bibitem[Hübotter et~al.(2024)Hübotter, Sukhija, Treven, As, and Krause]{transductiveactivelearning}
Jonas Hübotter, Bhavya Sukhija, Lenart Treven, Yarden As, and Andreas Krause.
\newblock Transductive active learning: Theory and applications.
\newblock In \emph{NeurIPS}, 2024.

\bibitem[Ladosz et~al.(2022)Ladosz, Weng, Kim, and Oh]{exploration_in_deep_rl}
Pawel Ladosz, Lilian Weng, Minwoo Kim, and Hyondong Oh.
\newblock Exploration in deep reinforcement learning: A survey.
\newblock \emph{Information Fusion}, 85, 2022.

\bibitem[Lakshminarayanan et~al.(2017)Lakshminarayanan, Pritzel, and Blundell]{simple_uncertainty_ensemble}
Balaji Lakshminarayanan, Alexander Pritzel, and Charles Blundell.
\newblock Simple and scalable predictive uncertainty estimation using deep ensembles.
\newblock In \emph{NeurIPS}, 2017.

\bibitem[Lee et~al.(2023)Lee, Cho, Park, and Kim]{lee2023cqm}
Seungjae Lee, Daesol Cho, Jonghae Park, and H~Jin Kim.
\newblock Cqm: Curriculum reinforcement learning with a quantized world model.
\newblock In \emph{NeurIPS}, 2023.

\bibitem[Li et~al.(2025)Li, Zou, and Liu]{li2025limr}
Xuefeng Li, Haoyang Zou, and Pengfei Liu.
\newblock Limr: Less is more for rl scaling.
\newblock \emph{arXiv preprint arXiv:2502.11886}, 2025.

\bibitem[MacKay(1992)]{mackay1992information}
David~JC MacKay.
\newblock Information-based objective functions for active data selection.
\newblock \emph{Neural computation}, 4\penalty0 (4), 1992.

\bibitem[Mendonca et~al.(2021)Mendonca, Rybkin, Daniilidis, Hafner, and Pathak]{lexa}
Russell Mendonca, Oleh Rybkin, Kostas Daniilidis, Danijar Hafner, and Deepak Pathak.
\newblock Discovering and achieving goals via world models.
\newblock In \emph{NeurIPS}, 2021.

\bibitem[Mnih et~al.(2013)Mnih, Kavukcuoglu, Silver, Graves, Antonoglou, Wierstra, and Riedmiller]{mnih2013playing}
Volodymyr Mnih, Koray Kavukcuoglu, David Silver, Alex Graves, Ioannis Antonoglou, Daan Wierstra, and Martin Riedmiller.
\newblock Playing atari with deep reinforcement learning.
\newblock \emph{arXiv preprint arXiv:1312.5602}, 2013.

\bibitem[Mnih et~al.(2015)Mnih, Kavukcuoglu, Silver, Rusu, Veness, Bellemare, Graves, Riedmiller, Fidjeland, Ostrovski, Petersen, Beattie, Sadik, Antonoglou, King, Kumaran, Wierstra, Legg, and Hassabis]{atari}
Volodymyr Mnih, Koray Kavukcuoglu, David Silver, Andrei~A. Rusu, Joel Veness, Marc~G. Bellemare, Alex Graves, Martin Riedmiller, Andreas~K. Fidjeland, Georg Ostrovski, Stig Petersen, Charles Beattie, Amir Sadik, Ioannis Antonoglou, Helen King, Dharshan Kumaran, Daan Wierstra, Shane Legg, and Demis Hassabis.
\newblock Human-level control through deep reinforcement learning.
\newblock \emph{Nature}, 518\penalty0 (7540), 2015.

\bibitem[Nachum et~al.(2018)Nachum, Gu, Lee, and Levine]{nachum2018data}
Ofir Nachum, Shixiang~Shane Gu, Honglak Lee, and Sergey Levine.
\newblock Data-efficient hierarchical reinforcement learning.
\newblock In \emph{NeurIPS}, 2018.

\bibitem[Nasiriany et~al.(2021)Nasiriany, Pong, Nair, Khazatsky, Berseth, and Levine]{discorl}
Soroush Nasiriany, Vitchyr~H. Pong, Ashvin Nair, Alexander Khazatsky, Glen Berseth, and Sergey Levine.
\newblock Disco rl: Distribution-conditioned reinforcement learning for general-purpose policies.
\newblock In \emph{ICRA}, 2021.

\bibitem[Osband et~al.(2016{\natexlab{a}})Osband, Blundell, Pritzel, and Roy]{bootstrapped_dqn}
Ian Osband, Charles Blundell, Alexander Pritzel, and Benjamin~Van Roy.
\newblock Deep exploration via bootstrapped dqn.
\newblock In \emph{NeurIPS}, 2016{\natexlab{a}}.

\bibitem[Osband et~al.(2016{\natexlab{b}})Osband, Roy, and Wen]{gen_exp_rand_val}
Ian Osband, Benjamin~Van Roy, and Zheng Wen.
\newblock Generalization and exploration via randomized value functions.
\newblock In \emph{ICML}, 2016{\natexlab{b}}.

\bibitem[Osband et~al.(2023)Osband, Wen, Asghari, Dwaracherla, Ibrahimi, Lu, and Van~Roy]{osband2023epistemic}
Ian Osband, Zheng Wen, Seyed~Mohammad Asghari, Vikranth Dwaracherla, Morteza Ibrahimi, Xiuyuan Lu, and Benjamin Van~Roy.
\newblock Epistemic neural networks.
\newblock In \emph{NeurIPS}, 2023.

\bibitem[Park et~al.(2024)Park, Rybkin, and Levine]{park2024metra}
Seohong Park, Oleh Rybkin, and Sergey Levine.
\newblock Metra: Scalable unsupervised rl with metric-aware abstraction.
\newblock In \emph{ICLR}, 2024.

\bibitem[Pathak et~al.(2017)Pathak, Agrawal, Efros, and Darrell]{pathak2017curiosity}
Deepak Pathak, Pulkit Agrawal, Alexei~A Efros, and Trevor Darrell.
\newblock Curiosity-driven exploration by self-supervised prediction.
\newblock In \emph{ICML}, 2017.

\bibitem[Pitis et~al.(2020)Pitis, Chan, Zhao, Stadie, and Ba]{mega}
Silviu Pitis, Harris Chan, Stephen Zhao, Bradly Stadie, and Jimmy Ba.
\newblock Maximum entropy gain exploration for long horizon multi-goal reinforcement learning.
\newblock In \emph{ICML}, 2020.

\bibitem[Plappert et~al.(2018)Plappert, Andrychowicz, Ray, McGrew, Baker, Powell, Schneider, Tobin, Chociej, Welinder, et~al.]{request_for_research_plappert}
Matthias Plappert, Marcin Andrychowicz, Alex Ray, Bob McGrew, Bowen Baker, Glenn Powell, Jonas Schneider, Josh Tobin, Maciek Chociej, Peter Welinder, et~al.
\newblock Multi-goal reinforcement learning: Challenging robotics environments and request for research.
\newblock \emph{arXiv preprint arXiv:1802.09464}, 2018.

\bibitem[Poesia et~al.(2024)Poesia, Broman, Haber, and Goodman]{poesia2024learning}
Gabriel Poesia, David Broman, Nick Haber, and Noah Goodman.
\newblock Learning formal mathematics from intrinsic motivation.
\newblock In \emph{NeurIPS}, 2024.

\bibitem[Pong et~al.(2020)Pong, Dalal, Lin, Nair, Bahl, and Levine]{skew_fit}
Vitchyr~H. Pong, Murtaza Dalal, Steven Lin, Ashvin Nair, Shikhar Bahl, and Sergey Levine.
\newblock Skew-fit: State-covering self-supervised reinforcement learning.
\newblock In \emph{ICML}, 2020.

\bibitem[Ren et~al.(2019)Ren, Dong, Zhou, Liu, and Peng]{hindsightgoalgeneration}
Zhizhou Ren, Kefan Dong, Yuan Zhou, Qiang Liu, and Jian Peng.
\newblock Exploration via hindsight goal generation.
\newblock In \emph{NeurIPS}, 2019.

\bibitem[Riedmiller et~al.(2018)Riedmiller, Hafner, Lampe, Neunert, Degrave, Wiele, Mnih, Heess, and Springenberg]{riedmiller2018learning}
Martin Riedmiller, Roland Hafner, Thomas Lampe, Michael Neunert, Jonas Degrave, Tom Wiele, Vlad Mnih, Nicolas Heess, and Jost~Tobias Springenberg.
\newblock Learning by playing solving sparse reward tasks from scratch.
\newblock In \emph{ICML}, 2018.

\bibitem[Russo and Roy(2014)]{russo_info_dir_sampling}
Daniel Russo and Benjamin~Van Roy.
\newblock Learning to optimize via information-directed sampling.
\newblock In \emph{NeurIPS}, 2014.

\bibitem[Russo et~al.(2018)Russo, Van~Roy, Kazerouni, Osband, Wen, et~al.]{russo2018tutorial}
Daniel~J Russo, Benjamin Van~Roy, Abbas Kazerouni, Ian Osband, Zheng Wen, et~al.
\newblock A tutorial on thompson sampling.
\newblock \emph{Foundations and Trends{\textregistered} in Machine Learning}, 11\penalty0 (1), 2018.

\bibitem[Sancaktar et~al.(2022)Sancaktar, Blaes, and Martius]{sancaktar2022curious}
Cansu Sancaktar, Sebastian Blaes, and Georg Martius.
\newblock Curious exploration via structured world models yields zero-shot object manipulation.
\newblock In \emph{NeurIPS}, 2022.

\bibitem[Sayar et~al.(2024)Sayar, Iacca, Oguz, and Knoll]{sayar2024diffusion}
Erdi Sayar, Giovanni Iacca, Ozgur~S Oguz, and Alois Knoll.
\newblock Diffusion-based curriculum reinforcement learning.
\newblock In \emph{NeurIPS}, 2024.

\bibitem[Schaul et~al.(2015)Schaul, Horgan, Gregor, and Silver]{schaul15}
Tom Schaul, Daniel Horgan, Karol Gregor, and David Silver.
\newblock Universal value function approximators.
\newblock In \emph{ICML}, 2015.

\bibitem[Schmidhuber(1991)]{schmidhuber1991learning}
J{\"u}rgen Schmidhuber.
\newblock Learning to generate sub-goals for action sequences.
\newblock In \emph{Artificial neural networks}, 1991.

\bibitem[Schmidhuber(2013)]{schmidhuber2013powerplay}
J{\"u}rgen Schmidhuber.
\newblock Powerplay: Training an increasingly general problem solver by continually searching for the simplest still unsolvable problem.
\newblock \emph{Frontiers in psychology}, 4, 2013.

\bibitem[Silver et~al.(2016)Silver, Huang, Maddison, Guez, Sifre, van~den Driessche, Schrittwieser, Antonoglou, Panneershelvam, Lanctot, Dieleman, Grewe, Nham, Kalchbrenner, Sutskever, Lillicrap, Leach, Kavukcuoglu, Graepel, and Hassabis]{alphago}
David Silver, Aja Huang, Chris~J. Maddison, Arthur Guez, Laurent Sifre, George van~den Driessche, Julian Schrittwieser, Ioannis Antonoglou, Veda Panneershelvam, Marc Lanctot, Sander Dieleman, Dominik Grewe, John Nham, Nal Kalchbrenner, Ilya Sutskever, Timothy Lillicrap, Madeleine Leach, Koray Kavukcuoglu, Thore Graepel, and Demis Hassabis.
\newblock Mastering the game of go with deep neural networks and tree search.
\newblock \emph{Nature}, 529\penalty0 (7587), 2016.

\bibitem[Silver et~al.(2017)Silver, Hubert, Schrittwieser, Antonoglou, Lai, Guez, Lanctot, Sifre, Kumaran, Graepel, et~al.]{silver2017mastering}
David Silver, Thomas Hubert, Julian Schrittwieser, Ioannis Antonoglou, Matthew Lai, Arthur Guez, Marc Lanctot, Laurent Sifre, Dharshan Kumaran, Thore Graepel, et~al.
\newblock Mastering chess and shogi by self-play with a general reinforcement learning algorithm.
\newblock \emph{arXiv preprint arXiv:1712.01815}, 2017.

\bibitem[Srinivas et~al.(2010)Srinivas, Krause, Kakade, and Seeger]{srinivas2009gaussian}
Niranjan Srinivas, Andreas Krause, Sham~M Kakade, and Matthias Seeger.
\newblock Gaussian process optimization in the bandit setting: No regret and experimental design.
\newblock In \emph{ICML}, 2010.

\bibitem[Sukhbaatar et~al.(2018)Sukhbaatar, Lin, Kostrikov, Synnaeve, Szlam, and Fergus]{sukhbaatar2018intrinsic}
Sainbayar Sukhbaatar, Zeming Lin, Ilya Kostrikov, Gabriel Synnaeve, Arthur Szlam, and Rob Fergus.
\newblock Intrinsic motivation and automatic curricula via asymmetric self-play.
\newblock In \emph{ICLR}, 2018.

\bibitem[Sukhija et~al.(2023)Sukhija, Treven, Sancaktar, Blaes, Coros, and Krause]{sukhija2023optimistic}
Bhavya Sukhija, Lenart Treven, Cansu Sancaktar, Sebastian Blaes, Stelian Coros, and Andreas Krause.
\newblock Optimistic active exploration of dynamical systems.
\newblock In \emph{NeurIPS}, 2023.

\bibitem[Sukhija et~al.(2025{\natexlab{a}})Sukhija, Coros, Krause, Abbeel, and Sferrazza]{sukhija2024maxinforl}
Bhavya Sukhija, Stelian Coros, Andreas Krause, Pieter Abbeel, and Carmelo Sferrazza.
\newblock Maxinforl: Boosting exploration in reinforcement learning through information gain maximization.
\newblock In \emph{ICLR}, 2025{\natexlab{a}}.

\bibitem[Sukhija et~al.(2025{\natexlab{b}})Sukhija, Treven, Sferrazza, Dorfler, Abbeel, and Krause]{sukhijaoptimism}
Bhavya Sukhija, Lenart Treven, Carmelo Sferrazza, Florian Dorfler, Pieter Abbeel, and Andreas Krause.
\newblock Optimism via intrinsic rewards: Scalable and principled exploration for model-based reinforcement learning.
\newblock In \emph{ICLR 2025 Robot Learning Workshop: Towards Robots with Human-Level Abilities}, 2025{\natexlab{b}}.

\bibitem[Sun et~al.(2020)Sun, Wang, Liu, Miller, Efros, and Hardt]{sun2020test}
Yu~Sun, Xiaolong Wang, Zhuang Liu, John Miller, Alexei Efros, and Moritz Hardt.
\newblock Test-time training with self-supervision for generalization under distribution shifts.
\newblock In \emph{ICML}, 2020.

\bibitem[Sun et~al.(2024)Sun, Li, Dalal, Xu, Vikram, Zhang, Dubois, Chen, Wang, Koyejo, et~al.]{sun2024learning}
Yu~Sun, Xinhao Li, Karan Dalal, Jiarui Xu, Arjun Vikram, Genghan Zhang, Yann Dubois, Xinlei Chen, Xiaolong Wang, Sanmi Koyejo, et~al.
\newblock Learning to (learn at test time): Rnns with expressive hidden states.
\newblock \emph{arXiv preprint arXiv:2407.04620}, 2024.

\bibitem[Sutton and Barto(2018)]{Sutton1998}
Richard~S. Sutton and Andrew~G. Barto.
\newblock \emph{Reinforcement Learning: An Introduction}.
\newblock The MIT Press, second edition, 2018.

\bibitem[Sutton et~al.(1999)Sutton, Precup, and Singh]{between_mdps_and_semi_mdps}
Richard~S. Sutton, Doina Precup, and Satinder Singh.
\newblock Between mdps and semi-mdps: A framework for temporal abstraction in reinforcement learning.
\newblock \emph{Artificial Intelligence}, 112\penalty0 (1), 1999.

\bibitem[Tang et~al.(2017)Tang, Houthooft, Foote, Stooke, Xi~Chen, Duan, Schulman, DeTurck, and Abbeel]{tang2017exploration}
Haoran Tang, Rein Houthooft, Davis Foote, Adam Stooke, OpenAI Xi~Chen, Yan Duan, John Schulman, Filip DeTurck, and Pieter Abbeel.
\newblock \# exploration: A study of count-based exploration for deep reinforcement learning.
\newblock In \emph{NeurIPS}, 2017.

\bibitem[Tarbouriech et~al.(2022)Tarbouriech, Darwiche~Domingues, Menard, Pirotta, Valko, and Lazaric]{adagoal}
Jean Tarbouriech, Omar Darwiche~Domingues, Pierre Menard, Matteo Pirotta, Michal Valko, and Alessandro Lazaric.
\newblock Adaptive multi-goal exploration.
\newblock In \emph{AISTATS}, 2022.

\bibitem[Tesauro(1994)]{tesauro1994td}
Gerald Tesauro.
\newblock Td-gammon, a self-teaching backgammon program, achieves master-level play.
\newblock \emph{Neural computation}, 6\penalty0 (2), 1994.

\bibitem[Trott et~al.(2019)Trott, Zheng, Xiong, and Socher]{sibling_rivalry}
Alexander Trott, Stephan Zheng, Caiming Xiong, and Richard Socher.
\newblock Keeping your distance: Solving sparse reward tasks using self-balancing shaped rewards.
\newblock In \emph{NeurIPS}, 2019.

\bibitem[van Hasselt et~al.(2016)van Hasselt, Guez, and Silver]{ddqn}
Hado van Hasselt, Arthur Guez, and David Silver.
\newblock Deep reinforcement learning with double q-learning.
\newblock In \emph{AAAI}, 2016.

\bibitem[Wang et~al.(2019)Wang, Lehman, Clune, and Stanley]{wang2019paired}
Rui Wang, Joel Lehman, Jeff Clune, and Kenneth~O Stanley.
\newblock Paired open-ended trailblazer (poet): Endlessly generating increasingly complex and diverse learning environments and their solutions.
\newblock \emph{arXiv preprint arXiv:1901.01753}, 2019.

\bibitem[Wang et~al.(2023)Wang, Torralba, Isola, and Zhang]{wang2023optimal}
Tongzhou Wang, Antonio Torralba, Phillip Isola, and Amy Zhang.
\newblock Optimal goal-reaching reinforcement learning via quasimetric learning.
\newblock In \emph{ICML}, 2023.

\bibitem[Wang et~al.(2025)Wang, Yang, Zeng, Ren, Liu, Peng, Cheng, He, Wang, Gao, et~al.]{wang2025reinforcement}
Yiping Wang, Qing Yang, Zhiyuan Zeng, Liliang Ren, Lucas Liu, Baolin Peng, Hao Cheng, Xuehai He, Kuan Wang, Jianfeng Gao, et~al.
\newblock Reinforcement learning for reasoning in large language models with one training example.
\newblock \emph{arXiv preprint arXiv:2504.20571}, 2025.

\bibitem[Warde-Farley et~al.(2018)Warde-Farley, Van~de Wiele, Kulkarni, Ionescu, Hansen, and Mnih]{discern}
David Warde-Farley, Tom Van~de Wiele, Tejas Kulkarni, Catalin Ionescu, Steven Hansen, and Volodymyr Mnih.
\newblock Unsupervised control through non-parametric discriminative rewards.
\newblock \emph{arXiv preprint arXiv:1811.11359}, 2018.

\bibitem[Zhang et~al.(2020)Zhang, Abbeel, and Pinto]{val_disagreement}
Yunzhi Zhang, Pieter Abbeel, and Lerrel Pinto.
\newblock Automatic curriculum learning through value disagreement.
\newblock In \emph{NeurIPS}, 2020.

\bibitem[Zhao et~al.(2025)Zhao, Wu, Yue, Wu, Xu, Lin, Wang, Wu, Zheng, and Huang]{zhao2025absolute}
Andrew Zhao, Yiran Wu, Yang Yue, Tong Wu, Quentin Xu, Matthieu Lin, Shenzhi Wang, Qingyun Wu, Zilong Zheng, and Gao Huang.
\newblock Absolute zero: Reinforced self-play reasoning with zero data.
\newblock \emph{arXiv preprint arXiv:2505.03335}, 2025.

\bibitem[Zuo et~al.(2025)Zuo, Zhang, Qu, Sheng, Zhu, Qi, Sun, Cui, Ding, and Zhou]{zuo2025ttrl}
Yuxin Zuo, Kaiyan Zhang, Shang Qu, Li~Sheng, Xuekai Zhu, Biqing Qi, Youbang Sun, Ganqu Cui, Ning Ding, and Bowen Zhou.
\newblock Ttrl: Test-time reinforcement learning.
\newblock \emph{arXiv preprint arXiv:2504.16084}, 2025.

\end{thebibliography}

\newpage
\appendix

\section*{\LARGE Appendices}
\startcontents
\printcontents{}{0}[2]{}

\clearpage

\section{Additional Related Work}\label{addtl_related_work}

\paragraph{Self-play}
Many games can be naturally formulated as sparse-reward reinforcement learning problems, where the agent receives a reward only upon winning. Seminal works have achieved superhuman ability using this approach~\citep[e.g.,][]{atari, alphago, silver2017mastering, berner2019dota}.
A central technique, enabling these successes, %
is self-play~\citep{tesauro1994td, schmidhuber1991learning,schmidhuber2013powerplay, alphago}.
There are clear connections between these techniques and the directed exploration in DISCOVER.
Self-play can be viewed as a form of goal selection~\citep{schmidhuber1991learning,schmidhuber2013powerplay}, where the current agent (or a simultaneously trained agent) is choosen as its own opponent.
This opponent selection promotes \textit{achievability}, as it is possible to beat a recent version of yourself, \textit{novelty} as the opponent is of similar ability, and \textit{relevance} as it is the currently strongest opponent to beat.\looseness=-1

\paragraph{Hierarchical RL}
DISCOVER is closely related to the field of hierarchical RL~\citep{between_mdps_and_semi_mdps,feudal_rl,nachum2018data}, which aims to explore and plan at a higher level of abstraction. Recent methods have introduced frameworks in which multiple hierarchical levels are used to propose and learn skills~\citep{nachum2018data,gehring2021hierarchical}. In this work, we focus on a simplified setting in which a single goal is selected for exploration in each episode. However, the general approach can be naturally extended to more complex hierarchical structures.

\paragraph{Directed exploration in sequential decision making}
DISCOVER is designed to efficiently address the exploration-exploitation dilemma, a key concept in sequential decision making.
Fundamentally, it expresses an agent's inevitable trade-off between the objectives of learning its environment and solving a task.
A particularly common approach to balancing exploration and exploitation is grounded in the principle of optimism in the face of uncertainty~\citep[e.g.,][]{srinivas2009gaussian}.
Thereby, the agent selects actions that maximize an upper confidence bound~(UCB) of the reward function, i.e., it selects actions which based on the agents' imperfect knowledge \emph{could} lead to a large reward.
In many settings of sequential decision making, such as linear bandits, this approach achieves the rate-optimal regret $\smash{R_T \sim \widetilde{O}(d \sqrt{T})}$~\citep{srinivas2009gaussian,abbasi2011improved,abbasi2013online,chowdhury2017kernelized}.
The DISCOVER objective extends UCB to the problem of goal selection in RL.
Beyond UCB, many other methods have been shown to effectively direct exploration towards ``relevant'' experience, such as in bandits~\citep[e.g.,][]{russo2018tutorial, russo_info_dir_sampling,hennig2012entropy}, in RL~\citep[e.g.,][]{curi2020efficientmodelbasedreinforcementlearning,sukhijaoptimism}, or in active learning~\citep[e.g.,][]{mackay1992information,transductiveactivelearning,bagatella2025active}.\looseness=-1

\section{Connection to the Likelihood of Success}
\label{app:connection}

From a goal-reaching perspective, the undiscounted version of the DISCOVER criterion is tightly connected to the actual objective of the agent, i.e., reaching the target goal.
This emerges naturally as when $\gamma \to 1$, $\epsilon \to 0$, and $\pi \to \pi^\star$, the value function becomes a (negative) quasimetric~\citep{wang2023optimal}. Thus, it is non-positive, and respects the triangle inequality,
\begin{equation}
V^\pi(s_0, g) + V^\pi(g, g^\star) \leq V^\pi(s_0, g^\star), \label{eq:triangle_inequality}
\end{equation}
for arbitrary $s_0 \in \mathcal{S}, g, g^\star\in\mathcal{G}$. Note that the direction of the inequality is flipped as the value function represents a negative distance.
For $\alpha=0.5$, DISCOVER maximizes a probabilistic estimate of the left hand side of \cref{eq:triangle_inequality}, which is a tight lower-bound to $V^\pi(s_0, g^\star)$.
The value $V^\pi(s_0, g^\star)$ is exactly the quantity of interest for the agent, as it represents the negative expected number of steps to reach the true goal $g^\star$.
Intuitively, DISCOVER selects goals that are optimistically going to guarantee the shortest path to the actual goal or, in the undiscounted case, the likelihood of reaching it within an episode.\looseness=-1

\section{Proofs}\label{appendix:proofs}

In this section, we prove the theoretical guarantee informally stated in Informal Theorem \ref{informal_thm:main}. We begin by introducing some useful notation and the formal assumptions before proving Theorem \ref{thm:main}.

\subsection{Notation}

We define the reward function $\smash{r^\star(g) = \alpha V^\star(s_0, g) + (1-\alpha) V^\star(g, g^\star)}$ for any fixed $\alpha \in (0,\frac{1}{2})$.
We define $d^\star(g, g') = -V^\star(g, g^\star)$.
We denote by ${\spG}_t \subseteq \spG$ the goals that are achievable by the agent in episode $t$ with probability at least $\alpha$, i.e., the policy is able to reach goals within ${\spG}_t$ with probability at least $\alpha$.
We use $\log$ to denote the natural logarithm.
For simplicity, we assume throughout that the initial state~$s_0$ is fixed across all episodes.

\subsection{Assumptions}

\begin{assumption}[Linear value function within feature space]\label{asm:model}
    For any $n \geq 1$ and for all $g \in \spG_n$, the value functions $V^\star(s_0,g)$ and $V^\star(g, g^\star)$ are linear in the features $\phi(\cdot), \varphi(\cdot) \in \mathbb{R}^d$ with $\phi(\cdot) \perp \varphi(\cdot)$, i.e., \begin{align*}
        V^\star(s_0,g) = \langle \phi(g), \theta \rangle \quad\text{and}\quad V^\star(g, g^\star) = \langle \varphi(g), \theta' \rangle
    \end{align*} for some fixed $\theta, \theta' \in \mathbb{R}^d$ with $\|\theta\|_2, \|\theta'\|_2 \leq 1$.
\end{assumption}

\begin{assumption}[Noisy feedback]\label{asm:feedback}
    In episode $t$, selecting any $g_t \in {\mathcal{G}}_t$, the agent receives noisy feedback $y_t = r^\star(g_t) + \varepsilon_t$.
    We assume that the noise sequence $\{\varepsilon_t\}_{t=1}^\infty$ is conditionally $R$-sub-Gaussian for a fixed constant $R \geq 0$, i.e., \begin{align*}
        \forall t \geq 0, \quad \forall \lambda \in \mathbb{R}, \quad \mathbb{E}[e^{\lambda \varepsilon_t} \mid \mathcal{F}_{t-1}] \leq \exp\left(\frac{\lambda^2 R^2}{2}\right)
    \end{align*} where $\mathcal{F}_{t-1}$ is the $\sigma$-algebra generated by the random variables $\{g_s, \varepsilon_s\}_{s=1}^{t-1}$ and $g_t$.
\end{assumption}

\begin{assumption}[Value function estimates]\label{asm:estimates}
    For any $h \geq 1, t \geq h$, the value function estimate relative to $s_0$ is given by \begin{align*}
        V_t(s_0, \cdot) &\defeq \phi(\cdot)^\top (\Sigma_t + \lambda I)^{-1} \sum_{s=h}^{t-1} \phi(g_s) y_s, \\
        \sigma_t(s_0, \cdot) &\defeq \sqrt{\phi(\cdot)^\top (\Sigma_t + \lambda I)^{-1} \phi(\cdot)},
    \end{align*} where $\Sigma_t = \sum_{s=h}^{t-1} \phi(g_s) \phi(g_s)^\top$ and $\lambda > 0$.
    The value function estimate relative to $g^\star$ is defined analogously with respect to the feature vector $\varphi(\cdot)$.
\end{assumption}

\begin{assumption}[Goal space contains optimal paths\footnote{This assumption simply states that the goal space $\spG$ is geodesically convex under the quasimetric $d^\star$ induced by the optimal value function. This is a standard ``reachability'' condition, which may be familiar to readers from control theory.}]\label{asm:optimal_path}
    For any goal $g' \in \spG$, the optimal path from $g'$ to $g^\star$ is contained in $\spG$.
    Formally, there exists a $g \in \spG$ such that $d^\star(g',g^\star)=d^\star(g',g)+d^\star(g,g^\star)$ for any $d^\star(g',g) \in [0, d^\star(g',g^\star)]$.
\end{assumption}

\begin{assumption}[Goal achievability]\label{asm:expansion}
    We denote by ${\spG}_t \subseteq \spG$ the goals that are achievable by the agent in episode $t$ with probability at least $\alpha \in (0,1)$.
    Moreover, for any $t \geq 1$, the $\spG_t$ contains all goals $g \in \spG$ for which we have previously selected a goal, which is $(1-\alpha)\kappa$-close (under the optimal value function).
    We call $\kappa > 0$ the \emph{expansion rate}.
    Formally, \begin{align*}
        \spG_{t+1} \supseteq \{g \in \spG \mid \exists t' \leq t : d^\star(g_{t'},g) \leq (1-\alpha)\kappa\}.
    \end{align*}
    Further, $\spG_0 \supseteq \{s_0\}$, i.e., the initial state is always achievable.
\end{assumption}

\paragraph{Relaxing \cref{asm:feedback}}
One can consider any individual feedback~$y_t$ as being the result of an oracle that achieves the commanded goal~$g_t$ with probability at least~$\alpha$ within $K$ episodes. With this looser assumption, our bound in \cref{informal_thm:main} simply increases by a factor $K$.

\subsection{Proof of \cref{informal_thm:main}}

We begin by restating the regret bound obtained for linear bandits in \cite{chowdhury2017kernelized}.

\begin{proposition}\label{lem:regret_bound}
    Let \cref{asm:model,asm:feedback,asm:estimates} hold.
    Fix any $\delta \in (0,1), n \geq 1, \alpha \in (0,1)$, and let \begin{align}
        \beta_t = 1 + R \sqrt{2(d \log(t-n+1) + 1 + \log(1/\delta))} \quad\text{for $t \geq n$}. \label{eq:beta}
    \end{align}
    We then have with probability $1-\delta$ that the regret of selecting goals $g_n, g_{n+1}, \dots$ with DISCOVER($\alpha, \beta_t$) is bounded by \begin{align*}
        \sum_{t=h}^{h+T} \max_{g\in\spG_n} r^\star(g) - r^\star(g_t) \leq O(d \sqrt{T} \log T \log(1/\delta)).
    \end{align*}
\end{proposition}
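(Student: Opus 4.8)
The plan is to observe that, once we combine the two value functions into the single composite reward $r^\star(g) = \alpha V^\star(s_0,g) + (1-\alpha) V^\star(g,g^\star)$, DISCOVER is exactly an upper-confidence-bound algorithm for a linear bandit over the fixed decision set $\spG_n$; the claim then follows by instantiating the kernelized linear-bandit regret analysis of \citet{chowdhury2017kernelized}. First I would verify the reduction. By \cref{asm:model}, $r^\star$ is an $\alpha$-weighted combination of two functions that are each linear in their feature maps $\phi$ and $\varphi$, and by \cref{asm:feedback} the observations $y_t$ are $r^\star(g_t)$ corrupted by conditionally $R$-sub-Gaussian noise, so the two hypotheses of the linear-bandit model (a bounded-norm linear target and sub-Gaussian feedback) hold. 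The orthogonality $\phi \perp \varphi$ from \cref{asm:model} is what lets the ridge estimates $V_t(s_0,\cdot)$ and $V_t(g,\cdot)$ in \cref{asm:estimates} recover the two components of $r^\star$ without interfering, so that each carries its own confidence interval.

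Next I would establish optimism and the per-step regret bound. Applying the self-normalized concentration inequality of \citet{chowdhury2017kernelized} (itself based on \citet{abbasi2011improved}) to each of the two regressions, with the radius $\beta_t$ chosen as in \cref{eq:beta}, gives that with probability $1-\delta$ the bounds $|V_t(s_0,g) - V^\star(s_0,g)| \leq \beta_t \sigma_t(s_0,g)$ and $|V_t(g,g^\star) - V^\star(g,g^\star)| \leq \beta_t \sigma_t(g,g^\star)$ hold simultaneously for all $g$ and all $t$. Writing $g^\star_n = \argmax_{g \in \spG_n} r^\star(g)$, optimism gives $r^\star(g^\star_n) \leq \alpha[V_t(s_0,g^\star_n) + \beta_t\sigma_t(s_0,g^\star_n)] + (1-\alpha)[V_t(g^\star_n,g^\star) + \beta_t\sigma_t(g^\star_n,g^\star)]$, and since $g_t$ maximizes precisely this index (the DISCOVER objective of \cref{eq:discover}), the right-hand side is at most the index evaluated at $g_t$. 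Subtracting the lower confidence bound at $g_t$ yields the standard instantaneous-regret inequality $r^\star(g^\star_n) - r^\star(g_t) \leq 2\beta_t[\alpha\,\sigma_t(s_0,g_t) + (1-\alpha)\,\sigma_t(g_t,g^\star)]$.

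Finally I would sum over the horizon. Since $\beta_t$ is nondecreasing, pulling out $\beta_T$ and applying Cauchy–Schwarz to each variance sum gives a bound in terms of $\sum_t \sigma_t(s_0,g_t)^2$ and $\sum_t \sigma_t(g_t,g^\star)^2$. For the finite-dimensional (linear) kernel, each such sum is controlled by the maximum information gain $\gamma_T = O(d\log T)$ via the standard elliptical-potential argument. Combining this with $\beta_T = O(\sqrt{d\log T + \log(1/\delta)})$ and $\alpha + (1-\alpha) = 1$ produces the claimed $O(d\sqrt{T}\log T\log(1/\delta))$ rate, with the $\alpha$-weights contributing only constants.

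The main obstacle is the reduction step rather than the regret algebra, which is entirely standard once the model is in place. Concretely, one must check that the two separately trained estimators of \cref{asm:estimates}, both fit against the \emph{same} scalar feedback $y_t$, genuinely decouple into independent confidence bounds for $V^\star(s_0,\cdot)$ and $V^\star(\cdot,g^\star)$; this is precisely where the orthogonality $\phi \perp \varphi$ is needed, as it ensures the cross-terms vanish so that regressing the composite feedback onto $\phi$ (respectively $\varphi$) isolates the corresponding value function. Once this decoupling is justified, DISCOVER's additive acquisition is a bona fide optimistic index for $r^\star$, and the remaining steps follow the GP-UCB template verbatim; the only care needed is bookkeeping the $\alpha$-dependent constants, which are absorbed into the $O(\cdot)$ and do not affect the stated dependence on $d$, $T$, and $\delta$.
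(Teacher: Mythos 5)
Your proposal is correct and follows essentially the same route as the paper: the paper's proof is a one-line invocation of Theorem 3 of \citet{chowdhury2017kernelized} for the composite linear reward $r^\star$ (using the orthogonality of $\phi$ and $\varphi$ from \cref{asm:model}) together with the bound $\gamma_T \leq O(d \log T)$, and your outline simply unpacks the internals of that cited theorem (self-normalized confidence bounds, optimism of the additive index, Cauchy--Schwarz plus the elliptical-potential bound on the variance sums). The decoupling of the two regressions that you flag as the main obstacle is treated in the paper at exactly the same level of rigor --- it is absorbed into the appeal to orthogonality --- so your version is, if anything, more explicit than the paper's.
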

\begin{proof}
    By Theorem 3 in \cite{chowdhury2017kernelized} and using that the feature spaces $\phi$ and $\varphi$ are orthogonal~(cf.~\cref{asm:model}), we have \begin{align*}
        \sum_{t=h}^{h+T} \max_{g\in\spG_n} r^\star(g) - r^\star(g_t) \leq O(\sqrt{T}(B\sqrt{\gamma_T}+\gamma_T + \sqrt{\gamma_T} \log(1/\delta)))
    \end{align*} with $B = 1$.
    Bounding $\gamma_T \leq O(d \log T)$ using \cref{asm:model}, completes the proof.
\end{proof}

\begin{lemma}\label{lem:1}
    Let \cref{asm:optimal_path} hold and fix any $\epsilon > 0, \alpha > 0$.
    Then, for all $g'\in\spG$ with $d^\star(g',g^\star) \geq \epsilon$ there exists a $g\in\spG$ with $d^\star(g,g')=\epsilon$ such that $r^\star(g)-r^\star(g') \geq (1-2\alpha) \epsilon$.
\end{lemma}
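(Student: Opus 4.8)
The plan is to construct the witness $g$ explicitly as the point lying at (forward) geodesic distance $\epsilon$ from $g'$ along the optimal path toward $g^\star$, and then to bound the \emph{achievability} and \emph{relevance} contributions to $r^\star(g)-r^\star(g')$ separately. Writing everything in terms of the quasimetric $d^\star = -V^\star$, we have $r^\star(g) = -\alpha\, d^\star(s_0,g) - (1-\alpha)\, d^\star(g,g^\star)$, so the increment decomposes as
\begin{align*}
r^\star(g) - r^\star(g') = \underbrace{-\alpha\big(d^\star(s_0,g) - d^\star(s_0,g')\big)}_{\text{achievability}} \;\underbrace{-\,(1-\alpha)\big(d^\star(g,g^\star) - d^\star(g',g^\star)\big)}_{\text{relevance}}.
\end{align*}

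First I would invoke \cref{asm:optimal_path}. Since $d^\star(g',g^\star) \geq \epsilon$, the value $\epsilon$ lies in $[0,d^\star(g',g^\star)]$, so there exists $g\in\spG$ on the optimal geodesic with $d^\star(g',g)=\epsilon$ and $d^\star(g',g^\star)=d^\star(g',g)+d^\star(g,g^\star)$ (this is the parametrization of \cref{asm:optimal_path}, matching the claimed distance up to the argument order of the asymmetric $d^\star$). The identity immediately yields $d^\star(g,g^\star)=d^\star(g',g^\star)-\epsilon$, so the relevance term equals exactly $-(1-\alpha)(-\epsilon)=(1-\alpha)\epsilon$: stepping along the geodesic reduces the distance to $g^\star$ by precisely $\epsilon$.

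For the achievability term I would use that $d^\star$ is a quasimetric (the negative optimal value function, cf.\ \cref{eq:triangle_inequality}) and apply the triangle inequality to the triple $s_0, g', g$, giving $d^\star(s_0,g)\leq d^\star(s_0,g')+d^\star(g',g)=d^\star(s_0,g')+\epsilon$. Hence $d^\star(s_0,g)-d^\star(s_0,g')\leq\epsilon$, so the achievability term is at least $-\alpha\epsilon$. Combining the two bounds yields $r^\star(g)-r^\star(g')\geq -\alpha\epsilon + (1-\alpha)\epsilon = (1-2\alpha)\epsilon$, which is the claim.

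There is essentially no hard computation here; the only point requiring care is the asymmetry of $d^\star$. The relevance gain is \emph{exact} because $g$ sits on the geodesic from $g'$ to $g^\star$, whereas the achievability loss is only an upper bound from the triangle inequality, and crucially both estimates involve the same forward distance $d^\star(g',g)=\epsilon$. The factor $(1-2\alpha)$ is exactly the margin by which the guaranteed relevance gain $(1-\alpha)\epsilon$ dominates the worst-case achievability loss $\alpha\epsilon$, which is why the increment is positive precisely for $\alpha<\tfrac12$; this positive per-step increment is what later drives the expansion of the achievable set toward $g^\star$ in the proof of \cref{informal_thm:main}.
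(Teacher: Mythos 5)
Your proof is correct and follows essentially the same route as the paper's: choose the witness $g$ on the optimal path from $g'$ to $g^\star$ at distance $d^\star(g',g)=\epsilon$ (so the relevance gain is exactly $(1-\alpha)\epsilon$ by \cref{asm:optimal_path}), then bound the achievability loss by $\alpha\epsilon$ via the triangle inequality applied to $s_0,g',g$. Your remark about the argument order of the asymmetric $d^\star$ is well taken --- the lemma statement writes $d^\star(g,g')=\epsilon$ while both your argument and the paper's actually use the forward distance $d^\star(g',g)=\epsilon$, an inconsistency in the paper's statement rather than a gap in your proof.
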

\begin{proof}
    Consider the optimal path from $g'$ to $g^\star$, i.e., the goals $g$ satisfying \begin{align*}
        d^\star(g',g^\star)=d^\star(g',g)+d^\star(g,g^\star).
    \end{align*}
    By \cref{asm:optimal_path}, for any $d^\star(g',g) \in [0, \epsilon]$, we have that $g \in \spG$.
    We take the goal $g$ such that $d^\star(g',g) = \epsilon$.
    We then obtain \begin{align*}
        r^\star(g)-r^\star(g') &= \alpha (V^\star(s_0,g)-V^\star(s_0,g'))+(1-\alpha)(V^\star(g,g^\star)-V^\star(g',g^\star)) \\
        &= \alpha (d^\star(s_0,g')-d^\star(s_0,g))+(1-\alpha)(d^\star(g',g^\star)-d^\star(g,g^\star)) \\
        &= \alpha (d^\star(s_0,g')-d^\star(s_0,g))+(1-\alpha) \epsilon \\
        &\geq \alpha (d^\star(s_0,g')-(d^\star(s_0,g')+d^\star(g',g)))+(1-\alpha) \epsilon \tag{triangle inequality} \\
        &= -\alpha d^\star(g',g)+(1-\alpha) \epsilon \\
        &= -\alpha \epsilon+(1-\alpha) \epsilon \\
        &= (1-2\alpha) \epsilon.
    \end{align*}
\end{proof}

\begin{lemma}[Improvement lemma]\label{lem:2}
    Let \cref{asm:model,asm:feedback,asm:estimates,asm:optimal_path} hold with $\beta_t$ as in \cref{eq:beta}.
    Fix any ${\delta \in (0,1)}$, ${n \geq 1}$, ${\epsilon > 0}$, ${\alpha \in (0,\frac{1}{2})}$ and $0 < \Delta < (1-2\alpha)\epsilon$.
    With probability $1-\delta$, there exist a ${t'\in \{n,\dots,n+T\}}$ with $\smash{T = \widetilde{\Theta}(\frac{d^2}{\alpha ((1-2\alpha)\epsilon-\Delta)^2})}$ and a $\tilde{g}\in\mathcal{G}$ with $d^\star(g_{t'},\tilde{g}) \leq \epsilon$ such that \begin{align*}
        r^\star(\tilde{g})-\max_{g\in\mathcal{G}_n} r^\star(g) \geq \Delta.
    \end{align*}
\end{lemma}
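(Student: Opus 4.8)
The plan is to argue by contradiction, converting a failure of the conclusion into a uniform per-episode regret lower bound that clashes with the UCB regret upper bound of \cref{lem:regret_bound}. Write $g^\dagger \in \argmax_{g \in \mathcal{G}_n} r^\star(g)$ for the benchmark goal, so that $\max_{g\in\mathcal{G}_n} r^\star(g) = r^\star(g^\dagger)$, and set $\rho \defeq (1-2\alpha)\epsilon - \Delta > 0$. Suppose, towards a contradiction, that for every $t' \in \{n,\dots,n+T\}$ and every $\tilde g$ with $d^\star(g_{t'},\tilde g) \leq \epsilon$ we have $r^\star(\tilde g) < r^\star(g^\dagger) + \Delta$.

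First I would turn this hypothesis into a per-episode regret bound. Fix any $t'$ in the window. If $d^\star(g_{t'},g^\star) \geq \epsilon$, then \cref{lem:1} applied to $g'=g_{t'}$ produces a goal $g^+_{t'}$ on the optimal path with $d^\star(g_{t'},g^+_{t'}) = \epsilon$ and $r^\star(g^+_{t'}) \geq r^\star(g_{t'}) + (1-2\alpha)\epsilon$. Since $g^+_{t'}$ lies within distance $\epsilon$ of $g_{t'}$, the contradiction hypothesis gives $r^\star(g^+_{t'}) < r^\star(g^\dagger) + \Delta$, and combining yields the instantaneous regret bound $r^\star(g^\dagger) - r^\star(g_{t'}) > (1-2\alpha)\epsilon - \Delta = \rho$. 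The boundary case $d^\star(g_{t'},g^\star) < \epsilon$ is benign: then $g^\star$ itself lies within $\epsilon$ of $g_{t'}$, and iterating \cref{lem:1} along the optimal path shows $r^\star$ grows by at least $(1-2\alpha)$ per unit of $d^\star$ toward $g^\star$, so $r^\star(g^\star) \geq r^\star(g^\dagger) + \Delta$ and the conclusion already holds with $\tilde g = g^\star$. Thus, under the hypothesis, every selected $g_{t'}$ in the window incurs instantaneous regret strictly exceeding $\rho$ against $g^\dagger \in \mathcal{G}_n \subseteq \mathcal{G}_{t'}$ (using the monotone expansion of the achievable set from \cref{asm:expansion}).

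Next I would collide this with \cref{lem:regret_bound}. The delicate point --- and the source of the factor $1/\alpha$ --- is that informative feedback and expansion of the achievable set are tied to actually reaching the commanded goal, which by \cref{asm:expansion} occurs with probability at least $\alpha$ (cf.\ the relaxation of \cref{asm:feedback}). I would therefore restrict the UCB analysis to the achieved episodes: their number $N_{\mathrm{ach}}$ concentrates around $\alpha T$, so a multiplicative Chernoff bound gives $N_{\mathrm{ach}} \geq \tfrac{\alpha}{2} T$ with high probability, while the elliptical-potential bound over these feedback-bearing episodes caps their cumulative regret at $\widetilde{O}(d\sqrt{N_{\mathrm{ach}}})$. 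On the intersection of the confidence event and the concentration event (a union bound over two $1-\delta/2$ events), the cumulative regret over achieved episodes is simultaneously at least $\rho\, N_{\mathrm{ach}}$ and at most $\widetilde{O}(d\sqrt{N_{\mathrm{ach}}})$, forcing $N_{\mathrm{ach}} \leq \widetilde{O}(d^2/\rho^2)$ and hence $T \leq \widetilde{O}(d^2/(\alpha \rho^2))$. Choosing $T = \widetilde{\Theta}(d^2/(\alpha((1-2\alpha)\epsilon-\Delta)^2))$ with a sufficiently large constant makes the hypothesis untenable, which is precisely the claimed conclusion.

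The main obstacle is the bookkeeping around achievement and feedback: justifying that restricting \cref{lem:regret_bound} to the $\Theta(\alpha T)$ achieved episodes is legitimate --- the UCB selection rule and the confidence bound hold on every episode, but the information-gain term only accumulates on episodes that produce feedback --- and cleanly folding the achievement probability $\alpha$ into the episode count via concentration. A secondary care point is that $d^\star$ is a quasimetric, so every ``within $\epsilon$'' statement must use $d^\star(g_{t'},\cdot)$ in the correct argument order, consistent with both \cref{lem:1} and the lemma's conclusion. The remaining ingredients --- \cref{lem:1}, the regret bound, and the monotonicity $\mathcal{G}_n \subseteq \mathcal{G}_{t'}$ --- then slot in directly.
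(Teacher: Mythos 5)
Your proposal is correct and takes essentially the same route as the paper's proof: both rest on \cref{lem:1} plus the regret bound of \cref{lem:regret_bound} applied to the $\Theta(\alpha T)$ achieved episodes (the Chernoff step of the paper's \cref{lem:3}), and your argument by contradiction is just the contrapositive of the paper's pigeonhole step that some achieved episode has instantaneous regret at most the average $\widetilde{O}(d/\sqrt{T_\ach}) \leq (1-2\alpha)\epsilon - \Delta$, yielding the identical choice of $T$. One minor caveat: in your boundary case $d^\star(g_{t'},g^\star) < \epsilon$ the claim $r^\star(g^\star) \geq r^\star(g^\dagger) + \Delta$ only follows when $d^\star(g^\dagger,g^\star) \geq \Delta/(1-2\alpha)$ and can fail if $\spG_n$ already contains goals that close to $g^\star$ --- but the paper's own proof silently skips this case as well (it invokes \cref{lem:1} without checking $d^\star(g_{t'},g^\star) \geq \epsilon$), and it is harmless in \cref{thm:main}, where the iteration terminates once $g^\star$ becomes achievable.
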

\begin{proof}
    With probability $1-\frac{\delta}{2}$, the number of episodes $T$ until the agent has achieved $T_\ach$ of its goals is bounded as ${T = \widetilde{\Theta}(\frac{T_\ach}{\alpha})}$.\footnote{See \cref{lem:3}.}
    We denote by $\spT \subseteq \{n, \dots, n+T\}$ the set of episodes in which the agent has achieved its goal.
    Further, by \cref{lem:regret_bound}, also with probability $1-\frac{\delta}{2}$, we have \begin{align*}
        R_n \defeq \frac{1}{|\spT|}\sum_{t\in\spT} \max_{g\in\spG_n} r^\star(g) - r^\star(g_t) \leq \widetilde{O}(d  / \sqrt{|\spT|}).
    \end{align*}
    All further steps are conditional on the union of the above high probability events.
    Thus, the regret in successful episodes is bounded by ${R_n \leq \widetilde{O}(d  / \sqrt{T_\ach})}$.

    Observe that for some $\smash{T_\ach = \widetilde{\Theta}(d^2 / ((1-2\alpha)\epsilon-\Delta)^2)}$, we have that $\smash{R_n \leq (1-2\alpha)\epsilon-\Delta}$, where the conditions $\alpha < \frac{1}{2}$ and $\Delta < (1-2\alpha)\epsilon$ ensure that the bound on the regret is positive.
    This then implies that there exists a $t'\in \spT$ such that \begin{align}
        \max_{g\in\spG_n} r^\star(g) - r^\star(g_{t'}) \leq (1-2\alpha)\epsilon-\Delta \label{eq:1}
    \end{align}
    Further, by \cref{lem:1}, there exists a $\tilde{g}\in\mathcal{G}$ such that $d^\star(g_{t'},\tilde{g})=\epsilon$ and \begin{align}
        r^\star(\tilde{g})-r^\star(g_{t'}) \geq (1-2\alpha)\epsilon. \label{eq:2}
    \end{align}
    Combining the above, we obtain \begin{align*}
        r^\star(\tilde{g})-\max_{g\in\mathcal{G}_n} r^\star(g) &\geq r^\star(\tilde{g}) - \brackets*{(1-2\alpha)\epsilon-\Delta + r^\star(g_{t'})} \tag{\cref{eq:1}} \\
        &= r^\star(\tilde{g}) - r^\star(g_{t'}) - (1-2\alpha)\epsilon + \Delta \\
        &\geq (1-2\alpha)\epsilon - (1-2\alpha)\epsilon + \Delta \tag{\cref{eq:2}} \\
        &= \Delta.
    \end{align*}
\end{proof}

\begin{figure}
    \centering
    \incplt[0.45\linewidth]{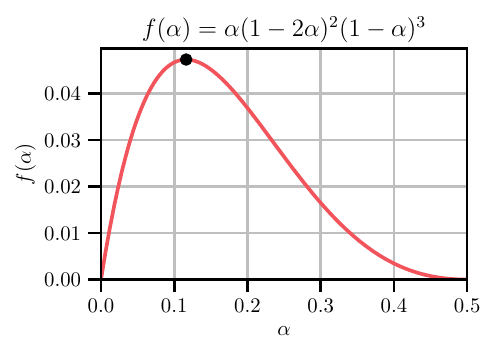}
    \vspace{-2ex}
    \caption{Plotting the effect of the parameter $\alpha$ on the (inverse) goal-achieval rate (cf.~\cref{thm:main}). The target goal is reached fastest for $\alpha \approx 0.1$.}
    \label{fig:theory_alpha}
\end{figure}

\begin{theorem}\label{thm:main}
    Let \cref{asm:model,asm:feedback,asm:estimates,asm:optimal_path,asm:expansion} hold with $\beta_t$ as in \cref{eq:beta}.
    Fix any ${\delta \in (0,1)}$, ${\alpha \in (0, \frac{1}{2})}$, and let $D = d^\star(s_0, g^\star)$.
    Then, with probability $1-\delta$, selecting goals $g_t$ with DISCOVER($\alpha, \beta_t$), the number of episodes $N$ until $g^\star \in \spG_N$ is bounded by $N \leq \widetilde{O}(\frac{D d^2}{\alpha(1-2\alpha)^2(1-\alpha)^3\kappa^3}) = \widetilde{O}(\frac{Dd^2}{\kappa^3})$.\looseness=-1
\end{theorem}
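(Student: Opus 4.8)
The plan is to prove the theorem by a phased induction: each phase invokes the improvement lemma (\cref{lem:2}) to push the best achievable reward a fixed increment closer to its maximum, thereby expanding the achievable set toward $g^\star$; I then count phases and multiply by the per-phase episode cost. First I would record two elementary facts about $r^\star(g) = -\alpha d^\star(s_0,g) - (1-\alpha)d^\star(g,g^\star)$. Since $d^\star(g^\star,g^\star)=0$ and $d^\star(s_0,g^\star)=D$, the target attains $r^\star(g^\star)=-\alpha D$, and the triangle inequality from \cref{asm:optimal_path} gives $r^\star(g) \leq -\alpha D - (1-2\alpha)d^\star(g,g^\star)$ for every $g$, so $g^\star$ is the global maximizer. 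Reading this bound in reverse, any selected-and-achieved goal $g_{t'}$ with $r^\star(g_{t'}) \geq -\alpha D - (1-2\alpha)(1-\alpha)\kappa$ satisfies $d^\star(g_{t'},g^\star) \leq (1-\alpha)\kappa$, so \cref{asm:expansion} forces $g^\star \in \spG$ at the next episode. This reduces the event $g^\star \in \spG_N$ to driving the achievable reward up to that threshold, starting from $r^\star(s_0)=-(1-\alpha)D$; the total gap to close is $(1-2\alpha)(D-(1-\alpha)\kappa) = O((1-2\alpha)D)$.

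Next I would set $\epsilon = (1-\alpha)\kappa$, so that the goal $\tilde g$ produced by \cref{lem:2} (which obeys $d^\star(g_{t'},\tilde g)\leq\epsilon$ for an \emph{achieved} $g_{t'}\in\spT$) lies within the expansion radius of an achieved goal and hence itself becomes achievable. I then run the algorithm in phases, each applying \cref{lem:2} with a fixed increment $\Delta = c(1-2\alpha)(1-\alpha)\kappa$ for a constant $c\in(0,1)$. A single phase costs $T = \widetilde{\Theta}\!\left(\tfrac{d^2}{\alpha((1-2\alpha)\epsilon-\Delta)^2}\right) = \widetilde{\Theta}\!\left(\tfrac{d^2}{\alpha(1-2\alpha)^2(1-\alpha)^2\kappa^2}\right)$ episodes and raises the best achievable reward by at least $\Delta$. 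Since the reward must climb by $O((1-2\alpha)D)$, the number of phases is $K = O\!\left(\tfrac{(1-2\alpha)D}{\Delta}\right) = O\!\left(\tfrac{D}{(1-\alpha)\kappa}\right)$. Multiplying yields $N = K\cdot T = \widetilde{O}\!\left(\tfrac{Dd^2}{\alpha(1-2\alpha)^2(1-\alpha)^3\kappa^3}\right)$, which collapses to $\widetilde{O}(Dd^2/\kappa^3)$ once the $\alpha$-dependent factors are absorbed into $\widetilde{O}$. The choice $c=1/3$ maximizes $c(1-c)^2$ and hence optimizes the constant, but any fixed $c$ gives the same rate.

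I would close by handling probability and conditioning. Each phase invokes \cref{lem:2}, and therefore the regret bound \cref{lem:regret_bound} and the achieval-count bound \cref{lem:3}, on its own phase-start achievable set; invoking each at confidence $\delta/K$ and taking a union bound over the $K$ phases gives overall probability $1-\delta$ at the cost of only a $\log K$ factor, which enters through the $\beta_t$ schedule \cref{eq:beta} and is swallowed by $\widetilde{O}$. The monotonicity $\spG_{n_i}\subseteq\spG_{n_{i+1}}$ ensures that using the phase-start set as the regret comparator is a valid lower bound on the true progress.

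I expect the main obstacle to be the bookkeeping that couples the bandit regret guarantee to the geometric expansion of the achievable set: one must confirm that the goal $g_{t'}$ controlled by the regret bound is genuinely achieved (so \cref{asm:expansion} applies to it), that the phase comparator $\max_{g\in\spG_{n_i}} r^\star(g)$ truly rises by the full $\Delta$ even though the expanded set is only guaranteed to contain $\tilde g$, and that the single choice $\epsilon=(1-\alpha)\kappa$ simultaneously matches the expansion radius and keeps $(1-2\alpha)\epsilon-\Delta$ bounded away from zero so the per-phase episode count $T$ stays finite. Verifying this chain is where the conditions $\alpha<\tfrac12$ and $\Delta<(1-2\alpha)\epsilon$ become essential.
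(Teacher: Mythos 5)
Your proposal is correct and follows essentially the same route as the paper's proof: iterate the improvement lemma (\cref{lem:2}) with $\epsilon=(1-\alpha)\kappa$ and $\Delta=\tfrac{1}{3}(1-2\alpha)(1-\alpha)\kappa$ (your $c=1/3$ is exactly the paper's optimized choice), expand the achievable set via \cref{asm:expansion}, count $M=O\bigl((1-2\alpha)D/\Delta\bigr)$ phases, and union-bound, yielding $N=MT\leq\widetilde{O}\bigl(Dd^2/(\alpha(1-2\alpha)^2(1-\alpha)^3\kappa^3)\bigr)$. Your explicit threshold termination (an achieved goal with $r^\star(g_{t'})\geq-\alpha D-(1-2\alpha)(1-\alpha)\kappa$ lies within $(1-\alpha)\kappa$ of $g^\star$, so one expansion step finishes) is a minor, if anything slightly more careful, variant of the paper's final step, which instead argues $\max_{g\in\spG_{MT}}r^\star(g)\geq r^\star(g^\star)$ and implicitly uses that $g^\star$ maximizes $r^\star$.
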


\begin{proof}
    We first note that \begin{align*}
        r^\star(g^\star) - r^\star(s_0) &= \alpha (V^\star(s_0,g^\star)-V^\star(s_0,s_0))+(1-\alpha)(V^\star(g^\star,g^\star)-V^\star(s_0,g^\star)) \\
        &= \alpha V^\star(s_0,g^\star) - (1-\alpha) V^\star(s_0,g^\star) \\
        &= (2\alpha - 1) V^\star(s_0, g^\star) = (1 - 2\alpha) d^\star(s_0, g^\star) = (1 - 2\alpha) D.
    \end{align*}
    We prove the theorem by applying \cref{lem:2} $M \defeq \lceil \frac{(1 - 2\alpha) D}{\Delta} \rceil$ times, while setting ${\epsilon = (1-\alpha)\kappa}$.
    First, for an arbitrary $0 \leq i \leq M-1$, we assume for the goal set $\spG_{i T}$ with some $\smash{T = \widetilde{\Theta}(\frac{d^2}{\alpha ((1-2\alpha)(1-\alpha)\kappa-\Delta)^2})}$ that it holds that \begin{align*}
        \max_{g\in \spG_{i T}} r^\star(g) \geq r^\star(s_0)+i\Delta.
    \end{align*}
    Now, applying \cref{lem:2} yields that after an additional $T$ steps, with high probability, there exists a $t'\in\{i T,\dots, (i+1) T\}$ such that there is a $\tilde{g}\in\spG$ with $d^\star(g_{t'}, \tilde{g}) \leq (1-\alpha)\kappa$ satisfying \begin{align*}
        r^\star(\tilde{g})-\max_{g\in\spG_{i T}} r^\star(g) \geq \Delta.
    \end{align*}
    Hence, by \cref{asm:expansion}, we have that $\tilde{g} \in \spG_{(i+1) T}$, and therefore, \begin{align*}
        \max_{g\in\spG_{(i+1) T}} r^\star(g) &\geq r^\star(\tilde{g}) \geq \Delta + \max_{g\in\spG_{i T}} r^\star(g) \geq r^\star(s_0)+(i+1)\Delta.
    \end{align*}
    Iterating this argument $M$ times and applying a union bound, we obtain \begin{align*}
        \max_{g\in\spG_{M T}} r^\star(g) &\geq r^\star(s_0)+M\Delta \geq r^\star(g^\star).
    \end{align*}
    The total number of episodes is \begin{align*}
        N \defeq M T \leq \widetilde{O}\parentheses*{\frac{(1 - 2\alpha) D d^2}{\Delta \alpha ((1-2\alpha)(1-\alpha)\kappa-\Delta)^2}}.
    \end{align*}
    We can optimize $\alpha$ and $\Delta$ to minimize $N$ under the constraints $0 < \alpha < \frac{1}{2}$, $\Delta > 0$, and ${\Delta < (1-2\alpha)(1-\alpha)\kappa}$.
    The optimal choices are $\Delta = \frac{1}{3} (1-2\alpha)(1-\alpha)\kappa$ and $\alpha \approx 0.1$.
    Substituting, we obtain $N \leq \widetilde{O}(\frac{D d^2}{\alpha(1-2\alpha)^2(1-\alpha)^3\kappa^3}) = \widetilde{O}(\frac{D d^2}{\kappa^3})$.
\end{proof}

Finally, we include a technical lemma that is used in the proof of \cref{lem:2}.

\begin{lemma}\label{lem:3}
    Let \(0<\alpha\le1\), \(0<\delta<1\) and suppose $T_{\ach} \geq8\log(1/\delta)$.
    Furthermore, let ${X_{1},\dots,X_{T}\stackrel{\mathrm{i.i.d.}}{\sim}\mathrm{Bern}(\alpha)}$ and
    $S_{T}=\sum_{t=1}^{T}X_{t}$.
    Then, for some $T = \widetilde{\Theta}(\tfrac{T_{\ach}}{\alpha})$, with probability $1-\delta$, we have $S_{T} \ge T_{\ach}$.
\end{lemma}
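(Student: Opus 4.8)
The plan is to choose the horizon $T$ proportional to $T_\ach/\alpha$ with a constant-factor slack, and then control the lower tail of $S_T$ with a multiplicative Chernoff bound. Concretely, I would take $T = \lceil 2 T_\ach / \alpha \rceil$, so that the expected number of successes satisfies $\mu \defeq \mathbb{E}[S_T] = \alpha T \geq 2 T_\ach$. This choice already gives $T = \Theta(T_\ach/\alpha) = \widetilde{\Theta}(T_\ach/\alpha)$ as required, and it places the target threshold $T_\ach$ a constant factor below the mean, so $\{S_T < T_\ach\}$ is a genuine lower-deviation event whose probability should decay exponentially in $\mu$.

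The key step is the multiplicative Chernoff inequality for sums of independent Bernoulli variables: for any $\eta \in (0,1)$,
\begin{align*}
\mathbb{P}[S_T \leq (1-\eta)\mu] \leq \exp\!\left(-\eta^2 \mu / 2\right).
\end{align*}
Since $T_\ach \leq \mu/2 = (1 - \tfrac{1}{2})\mu$, applying this with $\eta = \tfrac{1}{2}$ bounds the failure probability by
\begin{align*}
\mathbb{P}[S_T < T_\ach] \;\leq\; \mathbb{P}[S_T \leq \tfrac{1}{2}\mu] \;\leq\; \exp(-\mu/8) \;\leq\; \exp(-T_\ach/4).
\end{align*}
It then remains to absorb the hypothesis $T_\ach \geq 8\log(1/\delta)$, which yields $\exp(-T_\ach/4) \leq \exp(-2\log(1/\delta)) = \delta^2 \leq \delta$. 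Hence $S_T \geq T_\ach$ holds with probability at least $1-\delta$, completing the argument; note that $T_\ach \geq 4\log(1/\delta)$ would already suffice, so the stated condition leaves room to spare.

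I expect no real obstacle, as this is a standard concentration estimate, and the only point requiring care is the choice of inequality. An additive bound such as Hoeffding, applied to $X_i \in [0,1]$, would give a failure probability of order $\exp(-\alpha T_\ach)$, whose exponent degrades as $\alpha \to 0$ and would force a sample threshold scaling like $\log(1/\delta)/\alpha$ instead of the stated $\log(1/\delta)$. The multiplicative Chernoff bound avoids this because its exponent depends on the mean $\mu = \alpha T = \Theta(T_\ach)$ rather than on $\alpha$ in isolation, which is exactly what makes the clean $\widetilde{\Theta}(T_\ach/\alpha)$ sample complexity attainable. Finally, the integer ceiling in the definition of $T$ only ever increases the number of trials and thus can only increase $S_T$, so it is harmless and requires no separate treatment.
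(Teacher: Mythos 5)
Your proof is correct and follows essentially the same route as the paper's: choose $T = \Theta(T_\ach/\alpha)$ so that $\mu = \alpha T$ exceeds $T_\ach$ by a multiplicative margin, then apply the multiplicative Chernoff lower-tail bound and absorb the hypothesis $T_\ach \geq 8\log(1/\delta)$. The only difference is cosmetic: you fix the slack factor at $2$ (yielding failure probability $\delta^2$, hence your correct remark that $T_\ach \geq 4\log(1/\delta)$ would suffice), whereas the paper calibrates it as $(1+\gamma)^2$ with $\gamma = \sqrt{2\log(1/\delta)/T_\ach}$, so its constant tends to $1$ as $T_\ach$ grows.
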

\begin{proof}
    Set
    \[
        \gamma=\sqrt{\frac{2\log(1/\delta)}{T_{\ach}}},\quad
        T=\Bigl\lceil\tfrac{(1+\gamma)^2\,T_{\ach}}{\alpha}\Bigr\rceil
          =\widetilde\Theta\bigl(\tfrac{T_{\ach}}{\alpha}\bigr),
        \quad
        \mu=\mathbb{E}[S_{T}]=\alpha\,T.
    \]
    Note that $\mu\ge(1+\gamma)^2T_{\ach}$ and $0<\gamma<1$ (for $T_{\ach}\ge8\log(1/\delta)$). Hence,
    \[
        T_{\ach}=(1-\epsilon)\,\mu,
        \quad
        \epsilon=1-\tfrac{T_{\ach}}{\mu}=\tfrac{\gamma (2 + \gamma)}{(1+\gamma)^2} \geq \tfrac{\gamma}{1+\gamma}\in(0,1).
    \]
    By the multiplicative Chernoff bound,
    \[
        \Pr\bigl[S_{T}<T_{\ach}\bigr]
        =\Pr\bigl[S_{T}<(1-\epsilon)\mu\bigr]
        \le\exp\bigl(-\tfrac{\epsilon^{2}\mu}{2}\bigr)
        \le\exp\bigl(-\tfrac{\gamma^{2}T_{\ach}}{2}\bigr)
        =\exp\bigl(-\log(1/\delta)\bigr)
        =\delta.
    \]
\end{proof}

\clearpage\section{Additional Experimental Results} \label{appendix:experiments}
\begin{figure}
    \centering
    \begin{subfigure}[b]{0.3\textwidth}
    \centering
    \includegraphics[width=0.765\linewidth]{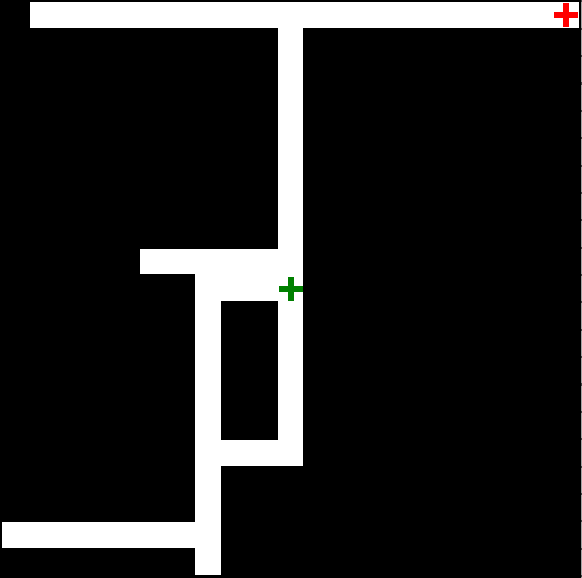}
    \caption{\texttt{pointmaze}}
    \end{subfigure}
    \begin{subfigure}[b]{0.3\textwidth}
    \centering
    \includegraphics[width=\linewidth]{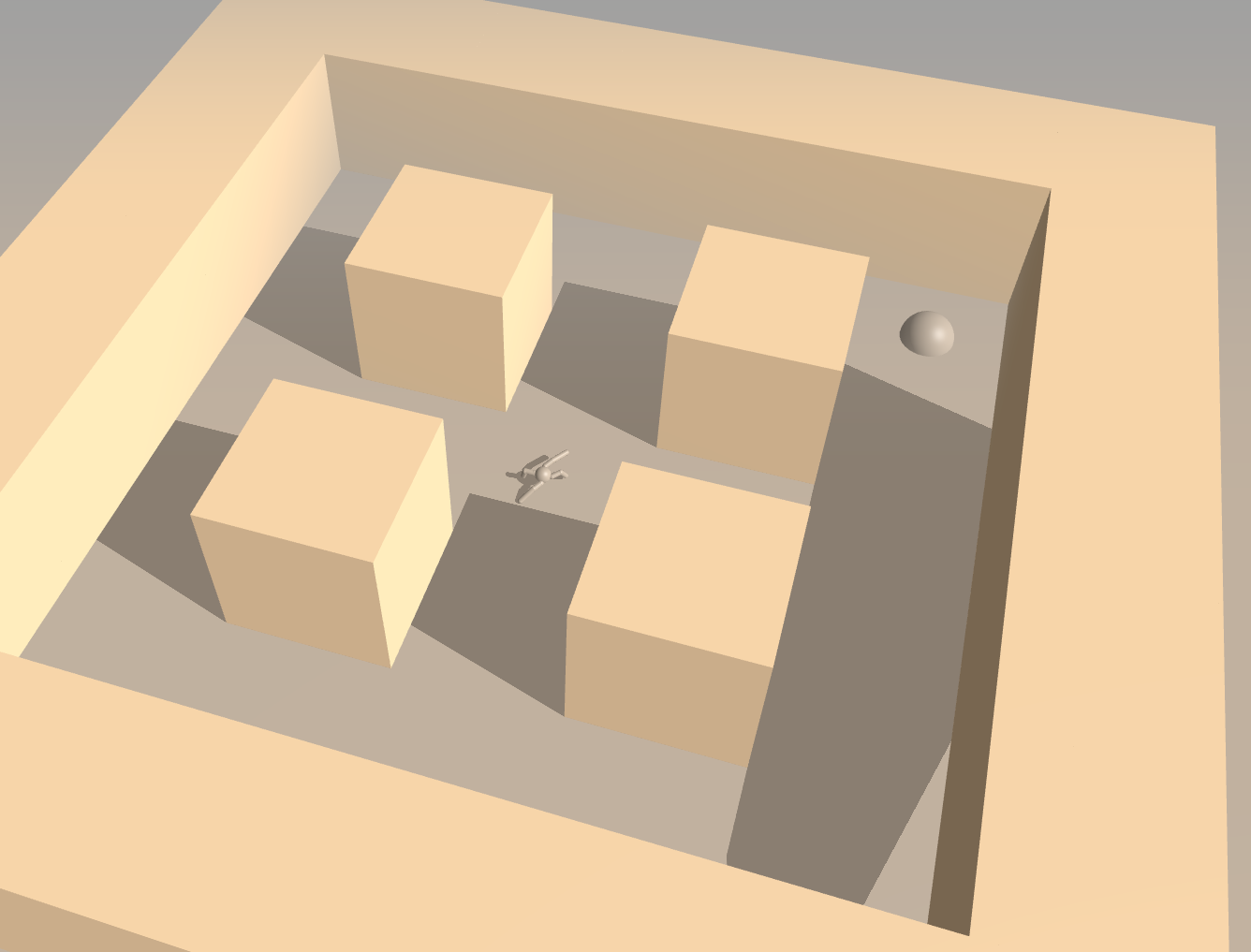}
    \caption{\texttt{antmaze}}
    \end{subfigure}
    \begin{subfigure}[b]{0.3\textwidth}
    \centering
    \includegraphics[width=0.917\linewidth]{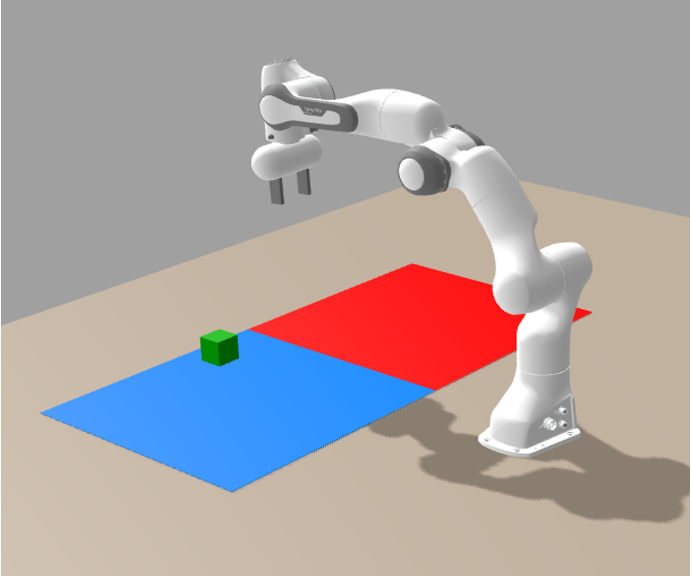}
    \caption{\texttt{arm}}
    \end{subfigure}
    \caption{Sparse-reward environments from the JaxGCRL \citep{jax_gcrl} library used in our evaluation. We additionally implement the \texttt{pointmaze} environment (left), which allows for arbitrary dimensionality. The maze is created by randomly generating paths in the environment until the target is found sufficiently often.}
    \label{fig:environment_visualization}
\end{figure}

In this section, we present additional experiments and ablations.
The environments used for this evaluation are visualized in \cref{fig:environment_visualization}~\citep{jax_gcrl}.\looseness=-1

\begin{wrapfigure}{r}{0.45\textwidth}
    \vspace{-2ex}
    \centering
    \incplt[\linewidth]{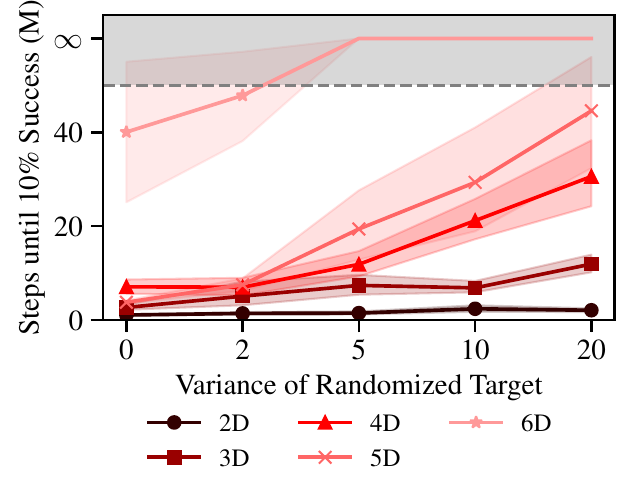}
    \vspace{-2ex}
    \caption{Evaluation of robustness directed goal selection with respect to noisy target estimates. The variance refers to the variance of the random Gaussian noise that is added to the target goal before selecting the goals using the hand-designed goal selection strategy, which uses the $L_2$ distance to estimate direction.}
    \label{fig:robustness}
    \vspace{-2ex}
\end{wrapfigure}
\paragraph{In high-dimensional search spaces, even direction estimates with high variance are useful.}
In complex environments, obtaining accurate direction estimates can be challenging. To evaluate the utility of directed goal selection in scenarios where direction estimates are imprecise, we add Gaussian noise to the target goal location. We then compare the number of environment steps required to reach a $10\%$ target goal achievement rate using the hand-designed goal selection strategy from \cref{fig:direction-prior}, under varying levels of noise variance.
The results in \cref{fig:robustness} show that even with substantial noise, \texttt{pointmaze} environments that are unsolvable by undirected methods, remain solvable by DISCOVER.
This demonstrates that even imprecise directional estimates can significantly aid target goal discovery in complex goal spaces.

\paragraph{Ablation of the online parameter adaptation strategy}
In \cref{fig:adaptation_ablation}, we evaluate the effect of using the simple proposed adaptation strategy with different target goal achievement ratio $p_{A}^\star$, and compare with fixing the $\alpha_t$ parameters to $0$ (Target Relevance + Novelty) and $0.5$ (Fixed DISCOVER). Furthermore, we report the average $\alpha_t$ for DISCOVER for all easy and hard tasks respectively in Figure \ref{fig:average_alpha}. The comparison of the success rates on the \texttt{antmaze} environments demonstrates that the adaptation strategy with any target goal achievement works better than fixing the parameters. This can be observed from the goal achievement rates. If we fix $\alpha_t=0.5$, we choose goals that are "too easy" and therefore don't explore sufficiently. On the other hand, by fixing $\alpha_t=0$ we select goals that are "too hard", which also leads too poor improvement. By using the simple adaptation strategy, we roughly achieve the target goal achievement specified. The optimal performance is achieved for the target goal achievement $p_A^\star=0.5$, which is in line with what other methods found~\citep{mega,li2025limr,wang2025reinforcement}. The average $\alpha_t$, which is found by the adaptation strategy, initially goes up to $0.15$, which is roughly what we found in the theoretical analysis in the linear bandit setting~(cf.~\cref{thm:main}), and then starts to decay. The decay can be explained by the fact that once we can reach the target goal, we don't need to optimize for \text{achievability} anymore.
\begin{figure}
    \centering
    \incplt[\linewidth]{adaptation_ablation_v3}
    \caption{Comparison of how the adaptation strategy influences the goal achievement and success rates. We compare two constant strategies (Fixed DISCOVER and Fixed Target Relevance + Novelty) with an adaptation rule for different goal achievement targets.}
    \label{fig:adaptation_ablation}
\end{figure}
\begin{figure} d
    \centering
    \incplt[\linewidth]{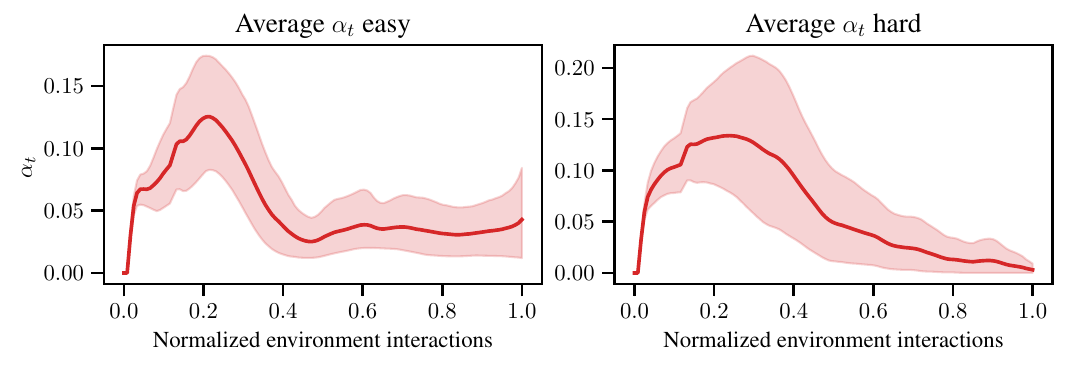}
    \caption{We plot the average $\alpha_t$ over the training, as adapted by the previously introduced online adaptation strategy for the DISCOVER goal selection strategy. The $\alpha_t$ are averaged over the three main environments.}
    \label{fig:average_alpha}
\end{figure}

\paragraph{Influence of the term $\sigma(g,g^\star)$}
In \cref{fig:beta_2_ablation}, we study how the standard deviation $\sigma(g,g^\star)$ from goal $g$ to the target $g^\star$ influences the training. This term theoretically is part of the UCB term, directing the agent towards the target goal.
To this end, we fix the contribution of $\sigma(s_0,g)$ (i.e., set $\beta_t=\frac{1}{\alpha_t}$) and consider a separate fixed $\beta_t' = \frac{\beta'}{1-\alpha}$, which determines the contribution of $\sigma(g,g^\star)$.
The plot shows that no value for the $\beta'$ parameter has a significant positive effect on the performance.
For this reason, we substitute $\sigma(s_0,g)$ for $\sigma(g,g^\star)$ in our other experiments with DISCOVER.
\begin{figure}
    \centering
    \incplt[\linewidth]{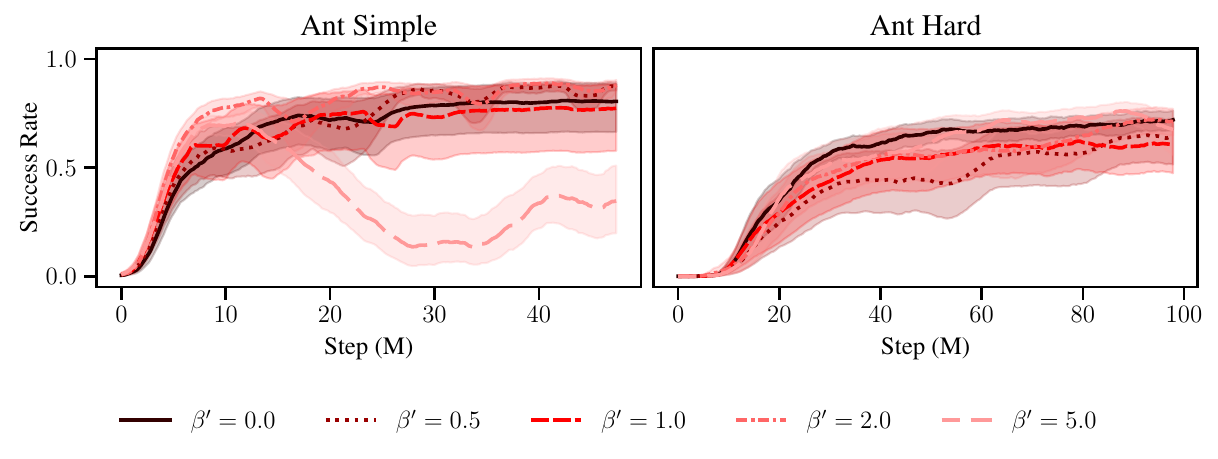}
    \caption{Comparison of how changing the coefficient of $\sigma(g,g^\star)$ influences the performance, when training on the two \texttt{antmaze} configurations. We use the configuration with $\beta' = 0$ in all other experiments.}
    \label{fig:beta_2_ablation}
\end{figure}
\begin{wrapfigure}{r}{0.4\linewidth}
    \centering
    \includegraphics[width=\linewidth]{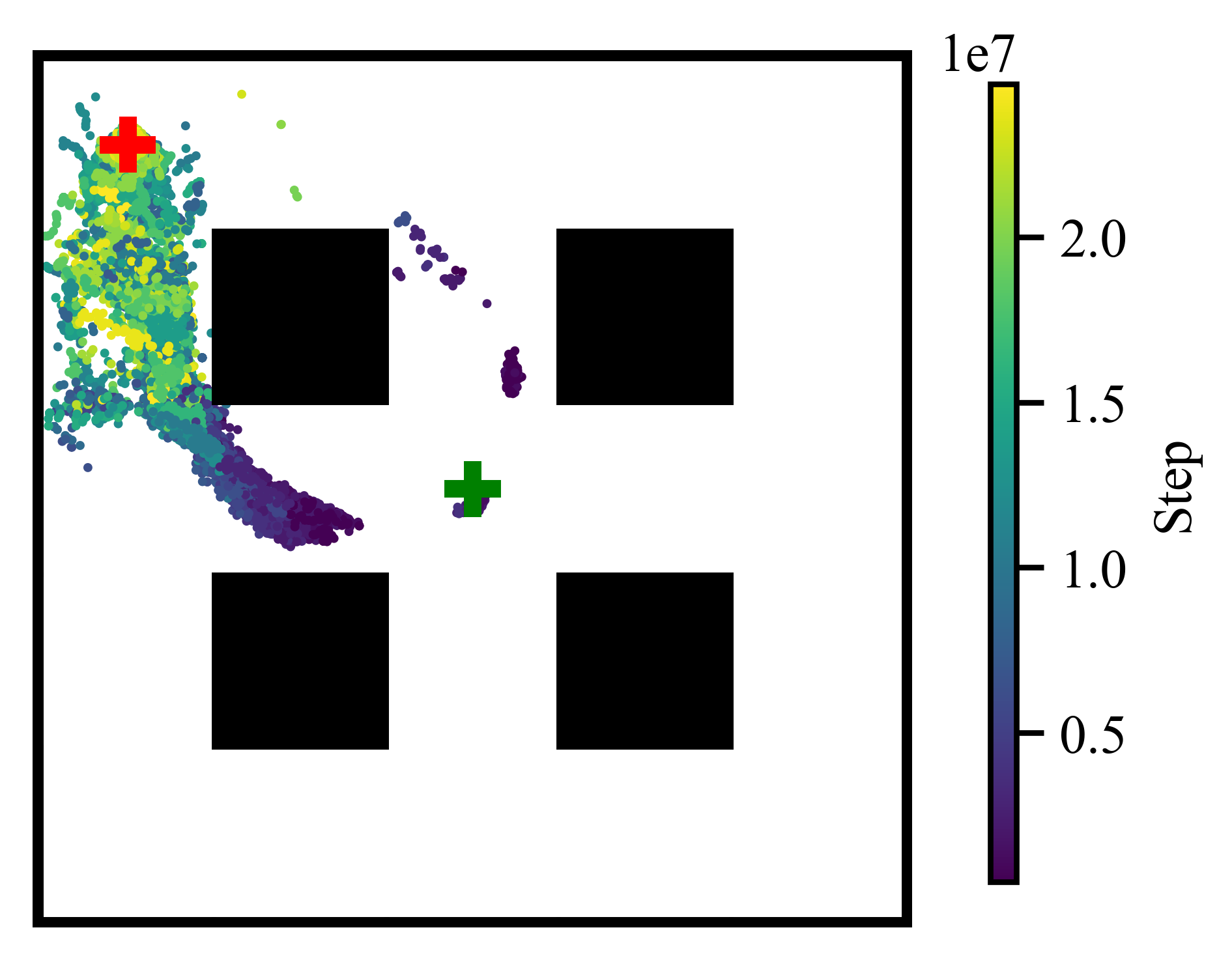}
    \caption{Selected goals by ``DISCOVER + pre-trained prior'' in the \texttt{antmaze} environment.}
    \label{fig:prior_selection_square}
    \vspace{-5ex}
\end{wrapfigure}
\paragraph{Exploration of DISCOVER + pre-trained prior} We visualize the exploration of the DISCOVER + pre-trained prior gaol selection strategy in Figure \ref{fig:prior_selection_square}. In comparison to DISCOVER starting from a randomly initialized agent, it only explores in the correct direction, avoiding obstacles. This demonstrates that access to prior can further improve performance of DISCOVER.\looseness=-1

\paragraph{Investigation of the role of the DISCOVER components for exploration}

We visualize the different components of the DISCOVER objective over the course of training in \cref{fig:value_function}. The first term $V(s_0,g)$ has high-value close to the initial state. By maximizing it, we will pick a goal that is close to the start and likely to be \textit{achievable}, which matches the intuition. The second term $V(g,g^\star)$ represents the value from a goal $g$ to the target goal $g^\star$. The plots show that in the first episodes the value is small everywhere and only once goals are discovered that are closer to the final target we observe higher values. Over the course of training, its role of encouraging to pick goals \textit{relevant} to the final target becomes more evident. This term therefore directs the goal selection towards the final goal. Finally, the standard deviation $\sigma(s_0,g)$ has the largest value at the border of the current achievable goal set and therefore encourages selecting \textit{novel} goals. In general, the components of the DISCOVER objective during the training match the previously presented intuition and can efficiently guide the goal selection towards the desired target.\looseness=-1
\begin{figure}[htbp]
  \centering
  \begin{subfigure}{0.49\textwidth}
    \includegraphics[width=\linewidth]{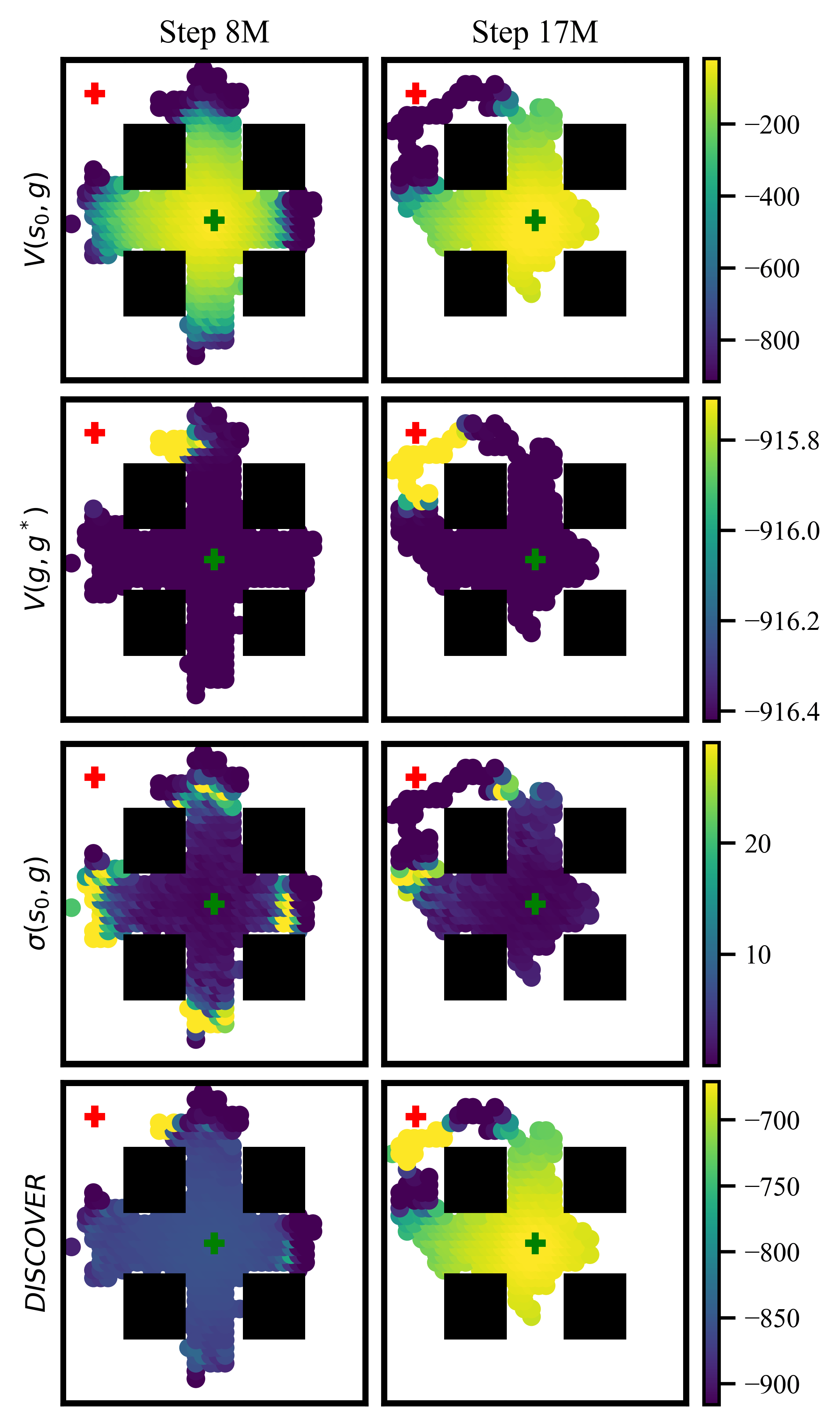}
    \label{fig:sub1}
  \end{subfigure}
  \hfill
  \begin{subfigure}{0.49\textwidth}
    \includegraphics[width=\linewidth]{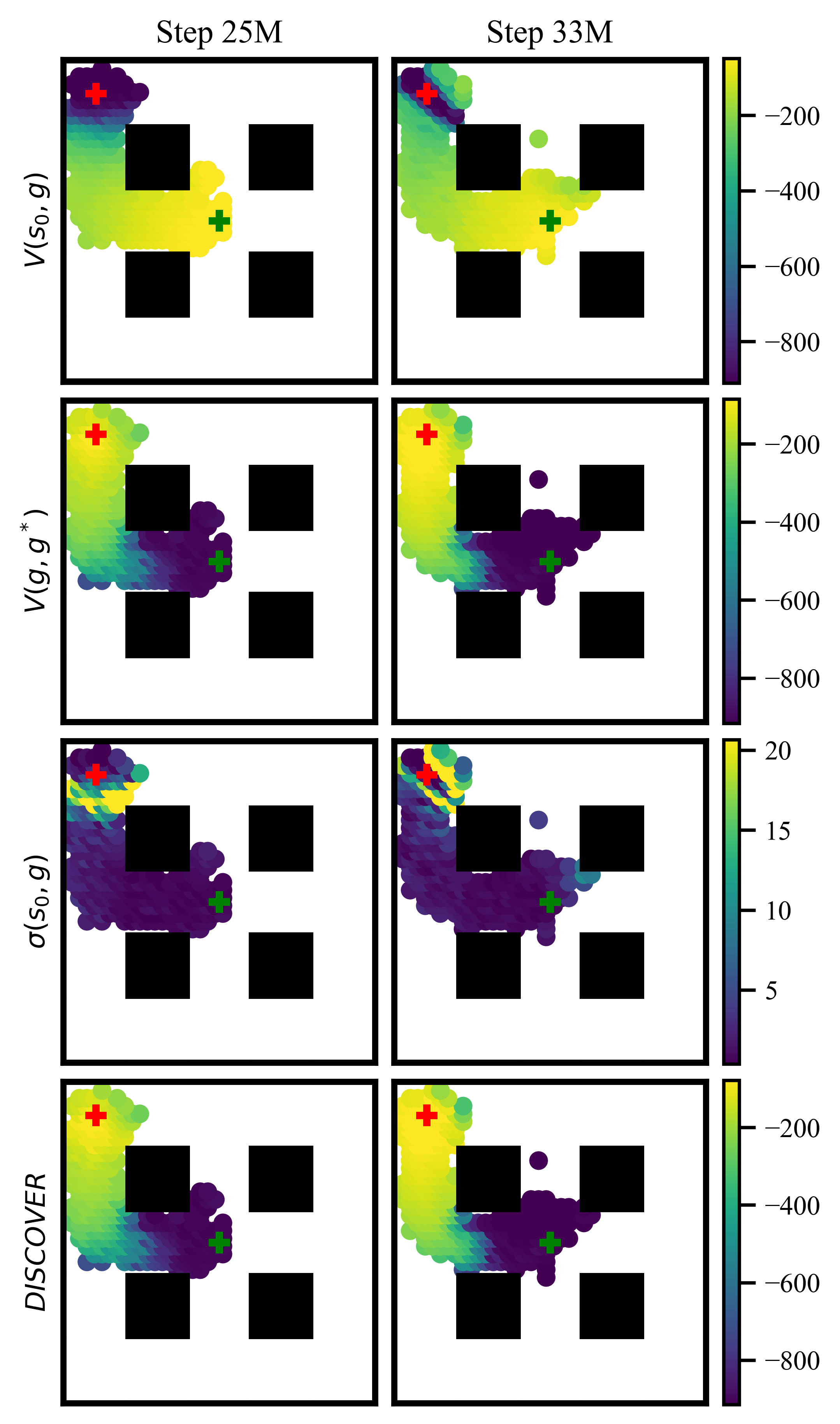}
    \label{fig:sub2}
  \end{subfigure}
  \caption{Visualization of the components of the DISCOVER objective at different points of the training. We plot the value functions in the regions, where achieved goals are sampled and therefore goals can be selected. The final DISCOVER objective combines the visualized terms with the current adaptation parameters $\alpha_t,\beta_t$.}
  \label{fig:value_function}
\end{figure}
\begin{figure}
    \centering
    \includegraphics[width=\linewidth]{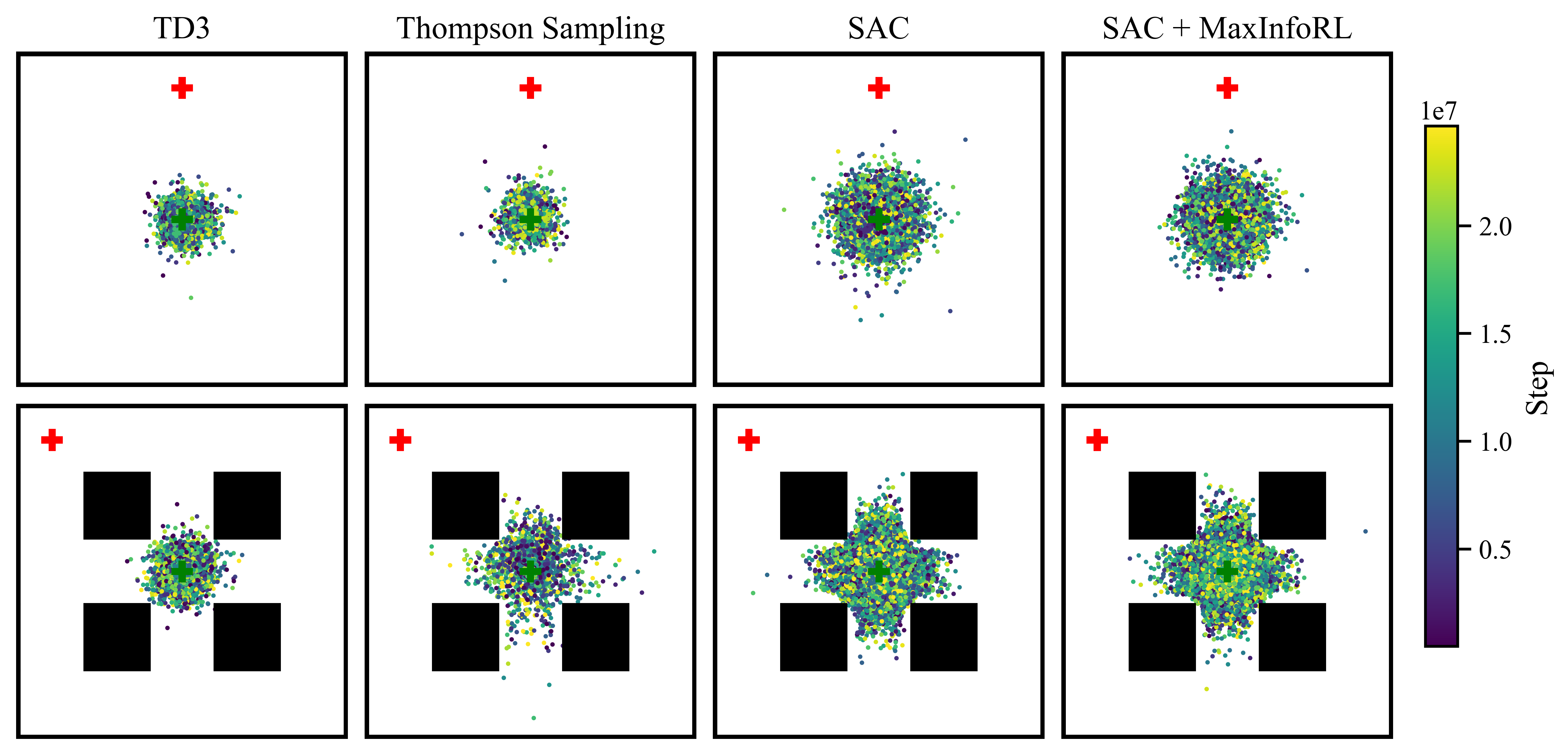}
    \caption{Visualization of the exploration of the standard non-goal-conditioned RL methods in the \texttt{antmaze} environments. We run the state-of-the-art MaxInfoRL~\citep{sukhija2024maxinforl} with SAC.} %
    \label{fig:rl_baselines_exploration}
\end{figure}

\begin{figure}[t]
    \centering
    \begin{subfigure}[t]{0.48\textwidth}
        \vspace{0pt}
        \centering
        \incplt[\linewidth]{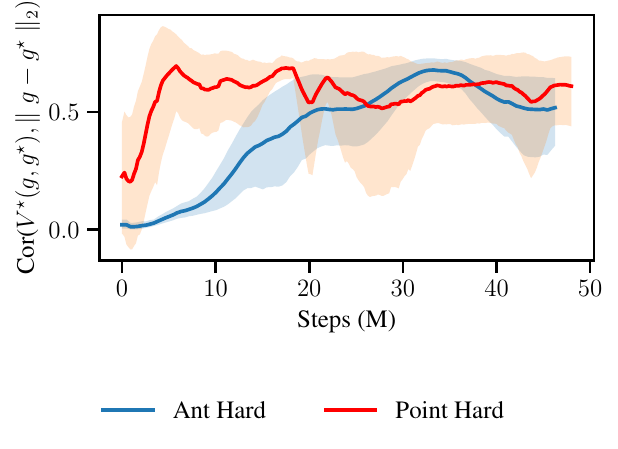}
        \caption{Direction estimates become useful before the final goal is reached. We plot the pearson correlation between the current relevance estimate and the L2-distance between the goal and the target goal for the hard variants of the \texttt{antmaze} and \texttt{pointmaze} environments.}
        \label{fig:correlation}
    \end{subfigure}
    \hfill
    \begin{subfigure}[t]{0.48\textwidth}
        \vspace{0pt}
        \centering
        \incplt[\linewidth]{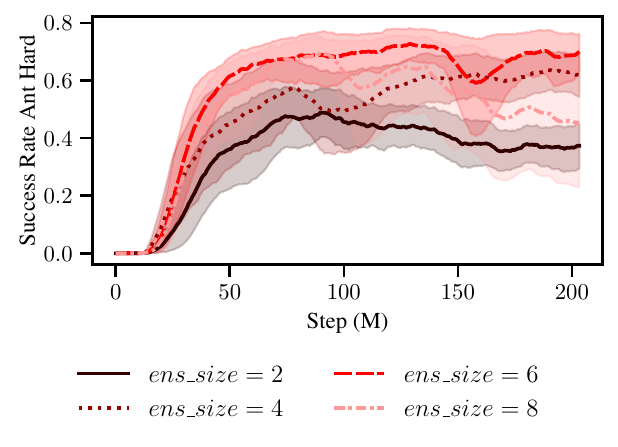}
        \caption{A value function ensemble size of six strikes a good balance between computational cost and the quality of uncertainty estimates, leading to improved exploration and task performance. We plot the performance of DISCOVER varying the number of value function ensemble members in the hard \texttt{antmaze} environment.}
        \label{fig:ensemble_performance}
    \end{subfigure}

    \label{fig:overall}
\end{figure}

\paragraph{Investigation of Direction Estimation}
To further investigate the ability of the relevance term in the DISCOVER objective to direct the exploration towards to the target goal, we plot the pearson correlation between the relevance term $V^\star(g,g^\star)$ and a proxy for the true distance $||g-g^\star||$ in \cref{fig:correlation}.
The correlation in both the \texttt{antmaze} and \texttt{pointmaze} environments increases quickly during training. This indicates that the relevance term captures a useful notion of distance, even before the final goal is reached for the first time.
This behavior explains DISCOVER's superior performance compared to undirected goal-selection strategies, as the value function is able to capture a useful notion of distance to the target, before it was reached. This allows DISCOVER to effectively guide the exploration to the true target goal. Notably, the correlation does not converge to one. This is expected, as the Euclidean distance in the goal space is an imperfect proxy for the true shortest traversable path within the environment.

\paragraph{Ablation of Ensemble Size}
A key parameter in the DISCOVER algorithm is the value function ensemble size. To quantify the impact of ensemble size on uncertainty estimates, we evalute the performance of DISCOVER in the hard \texttt{antmaze} environment, comparing ensembles of 2, 4, 6, and 8 critics in \cref{fig:ensemble_performance}. Consistent with prior work \cite{bootstrapped_dqn}, increasing the number of critics above the default of two improves uncertainty estimation. In our experiments, an ensemble of six critics proved sufficient to yield robust performance. \par
Relying on an ensemble of critics can be computationally expensive. We find that in our experiments the overhead of increasing the critic size from 2 to 6 was modest. This is mainly due to small critic networks and highly parallel training using jax. In practice, we find that increasing the ensemble from 2 to 6 critics was sufficient to yield high‐quality uncertainty estimates, while incurring only a ~26\% increase in runtime.

\clearpage
\section{Implementation Details}\label{appendix:implementation_details}

We consistently substitute $\sigma(g,g^\star)$ by $\sigma(s_0,g)$ in all our experiments.
In our evaluated environments, the empirical performance of DISCOVER is largely irrespective of $\sigma(g, g^\star)$~(cf.~\cref{fig:beta_2_ablation}).

\subsection{Probabilistic Value Estimation}\label{appendix:implementation_details:value_estimation}

A crucial component of DISCOVER is a probabilistic model of the value function, which can enable uncertainty-aware strategies. Fortunately, there are many options for probabilistic models of the value function \citep{simple_uncertainty_ensemble, bootstrapped_dqn, discorl}. For simplicity, we employ an ensemble of value functions~\citep{bootstrapped_dqn} and quantify uncertainty via disagreement. This strategy has been used before to select goals, which have high exploration potential and therefore provide novel experiences~\citep{val_disagreement}. Intuitively, these ensemble provide valid uncertainty estimates, as we use different random initilizations for the networks and train on different data. If the networks have been trained on a certain training sample sufficiently often, the different ensemble members will converge to the same value, while if a sample hasn't been observed yet or only a few times the discrepancy will be higher. The mean and standard deviations used for the DISCOVER objective are computed as follows:
\begin{align}
V(s,g)&=\frac{1}{N}\sum\limits_{i=1}^N V_i(s,g) & \sigma^2(s,g) = \frac{1}{N}\sum\limits_{i=1}^N (V_i(s,g)-V(s,g))^2
\end{align}
This can be seen as a straightforward extension of the standard twin critic approach \citep{td3}. We find that a slightly higher numer of critic improves accuracy of uncertainty estimates. We further find that by training each critic against a random minimum of two target critics we obtain sufficient diversity for good uncertainty estimates as well as circumvent the maximization bias \citep{ddqn}. Additionally, we use a softplus activation at the output of each critic to limit the values to negative values.

Scaling critic ensembles to domains with large models (e.g., language) is challenging.
An exciting direction for future work is to explore DISCOVER with other tools for uncertainty quantification, such as epistemic neural networks~\citep{osband2023epistemic}.

\subsection{Environment Details}
We adapt the \texttt{antmaze} and \texttt{arm} environments from the JaxGCRL library \citep{jax_gcrl}. In both cases, we fix the initial-state distribution to be a uniform pertubation of a fixed intial state, and we fix a single target goal location per environment (marked by the red cross in \cref{fig:goal_selection}). In addition to these two challenging high-dimensional benchmarks, we implement a \texttt{pointmaze} environment with potentially arbitrary dimensionality $d$. In our experiements we consider \texttt{pointmazes} with $d\in[2,\dots,6]$. We construct the \texttt{pointmazes} as follows. We sample random paths in the $d$-dimensional hypercube, by starting from the origin and randomly changing direction with a probability of $16.6\%$. We terminate this process once the target goal is observed sufficiently often. This number was set to 2 to 4 depending on the dimensionality of the maze.\par
In our experiments we evaluate both “simple” and “hard” versions of each environment. For the \texttt{pointmaze}, the simple configuration is realized on a two-dimensional grid while the hard configuration uses a four-dimensional hypercube. In the \texttt{antmaze}, the simple setup contains no walls between start and goal, whereas the hard version introduces a single wall that blocks the direct route to the target (visualized in \cref{fig:goal_selection}). Finally, in the \texttt{arm} environment the simple scenario places one small obstacle that the manipulator must skirt around, while the hard variant includes two larger obstacles that require more precise cube maneuvering to reach the goal. \par
In the \texttt{pointmaze}, the goal space coincides with the full state space, covering all $d$ dimensions (for the simple maze and for the hard maze). Consistent with JaxGCRL \cite{jax_gcrl}, in the \texttt{antmaze} the goals are defined by the agent’s planar-coordinates, and in the \texttt{arm} environment by the 3-dimensional coordinates of the cube.

\subsection{Training Hyperparameters}\label{appendix:hyperparameters}

\begin{table}[H]
   \centering
   \begin{tabular}{lccr}
   \toprule
   \textbf{Hyperparameter} & \textbf{Value}\\
   \midrule
   Offline RL algorithm & TD3 \\
   Ensemble size & $6$\\
   Discount factor & $0.99$\\
   Batch size & $256$\\
   Learning rate & $3\cdot 10^{-4}$\\
   Policy update delay & $2$\\
   Target critic Polyak factor & $0.005$\\
   Relabel strategy & Uniform future: $70\%$, original: $30\%$\\
   Target critic computation & Minimum of two random target critics\\
   Size of critic ensemble & 6\\
   Initial apdation parameter $\alpha_0$ & 0\\
   Horizon & 100-250\\
   Parameter adaptation lookback~$k_\text{adapt}$ & 64-128\\
   \bottomrule
   \end{tabular}
   \vspace{2ex}
   \caption{Hyperparameters for training in JaxGCRL Environments.}
   \label{tab:hyperparams}
\end{table}
\end{document}